\begin{document}

\newcommand{\sai}[1]{\textcolor{blue}{Sai: #1}}

\newcommand{\sharath}[1]{\textcolor{red}{Sharath: #1}}

\newcommand{\sandeep}[1]{\textcolor{green}{#1}}

\newcommand{\philip}[1]{\textcolor{red}{#1}}

\newcommand{\todo}[1]{\textcolor{red}{TODO: #1}}

\newcommand{\coolname}{\emph{Delayed Communication} \textrm{MDP}}

\newcommand{\coolnameabbr}{\textrm{DC$-$MDP}}

\newcommand{\rebuttal}[1]{\textcolor{blue}{#1}}


\newcommand{\mdptuple}{\langle \mathrm{S}, \mathrm{Init}, \mathrm{Act}, \mathbb{A}, \mathbb{P} \rangle}

\newcommand{\bad}{\mathrm{{S}_{unsafe}}}

\newcommand{\piprpsi}{\mathrm{V_{\mathcal{M}, \theta}^\pi}}

\newcommand{\piprphi}{\mathrm{V_{\mathcal{M}, \varphi}^\pi}}

\newcommand{\piprphieps}{\mathrm{V_{\mathcal{M}, \varphi}^{\pi_{\epsilon}}}}

\newcommand{\minprphi}{\mathrm{V_{\mathcal{M}, \varphi}^{min}}}

\newcommand{\maxprphi}{\mathrm{V_{\mathcal{M}, \varphi}^{max}}}

\newcommand{\optsafepolicy}{\pi_{\mathcal{M}, \varphi}^{\mathrm{safe}}}

\newcommand{\mininitsafety}{\mathbb{E}_{s \sim \mathrm{Init}} \left [ \mathrm{V_{\mathcal{M}, \varphi}^{min}}(s) \right ]}

\newcommand{\piinitsafety}{\mathbb{E}_{s \sim \mathrm{Init}} 
\left [ 
\piprphi(s)
\right ]}

\newcommand{\maxinitsafety}{\mathbb{E}_{s \sim \mathrm{Init}} \left [ \mathrm{V_{\mathcal{M}, \varphi}^{max}}(s) \right ]}

\newcommand{\piepsinitsafety}{\mathbb{E}_{s \sim \mathrm{Init}} \left [ \mathrm{V_{\mathcal{M}, \varphi}^{\pi_{\epsilon}}}(s) \right ]}

\newcommand{\minqphi}{{\mathrm{Q_{\mathcal{M}, \varphi}^{min}}}(s,a)}

\newcommand{\maxqphi}{{\mathrm{Q_{\mathcal{M}, \varphi}^{max}}}(s,a)}


\newcommand{\picloud}{\pi_{\mathrm{cloud}}}

\newcommand{\basicmdp}{\mathcal{M}_b}

\newcommand{\epshield}{\epsilon $-$\mathrm{shield}}

\newcommand{\epshieldsym}{\mathrm{C}_\epsilon}

\newcommand{\epstarshield}{\epsilon^\ast $-$\mathrm{shield}}

\newcommand{\epstarshieldsym}{\mathrm{C}_{\epsilon}^\ast}

\newcommand{\epsmdptuple}{\langle \mathrm{S}, \mathrm{Init}, \mathrm{Act}, \mathrm{C}_\epsilon, \mathbb{P} \rangle}

\newcommand{\epsbarmdptuple}{\langle \mathrm{S}, \mathrm{Init}, \mathrm{Act}, \mathrm{C}_{\bar{\epsilon}}, \mathbb{P} \rangle}

\newcommand{\minprphieps}{\mathrm{V_{\mathcal{M}_\epsilon, \varphi}^{\mathrm{min}}}}

\newcommand{\maxprphieps}{\mathrm{V_{\mathcal{M}_\epsilon, \varphi}^{\mathrm{max}}}}

\newcommand{\piprphiepsbar}{\mathrm{V_{\mathcal{M}_{\bar{\epsilon}}, \varphi}^\pi}}

\newcommand{\minprphiepsbar}{\mathrm{V_{\mathcal{M}_{\bar{\epsilon}}, \varphi}^{\mathrm{min}}}}

\newcommand{\actinitsafety}{\mathbb{E}_{s \sim \mathrm{Init}} \left [ \mathrm{V_{\mathcal{M}, \varphi}^\pi}(s) \right ]}

\newcommand{\actinitsafetyeps}{\mathbb{E}_{s \sim \mathrm{Init}} \left [ \mathrm{V_{\mathcal{M}, \varphi}^{\pi_\epsilon}}(s) \right ]}

\newcommand{\mininitsafetyeps}{\mathbb{E}_{s \sim \mathrm{Init}} \left [ \mathrm{V_{\mathcal{M}_\epsilon, \varphi}^{\mathrm{min}}}(s) \right ]}

\newcommand{\R}{\mathbb{R}}

\newcommand{\stateset}{\mathrm{S}}

\newcommand{\pr}{\mathrm{Pr}}

\newcommand{\initset}{\mathrm{Init}}

\newcommand{\maxcommdelay}{\mathrm{\tau_{max}}}

\newcommand{\ptau}{\mathbb{P_\tau}}

\newcommand{\delaytransition}{\ptau: \Omega \times \Omega \rightarrow [0,1]}

\newcommand{\mdptransition}{\mathbb{P}:\mathrm{S} \times \mathrm{Act} \times \mathrm{S} \rightarrow [0,1]}

\newcommand{\labelfn}{\mathrm{L}:\mathrm{S} \rightarrow 2^{\mathrm{AP}}}

\newcommand{\mdp}{\mathcal{M}}

\newcommand{\latencymodel}{\mathbb{P}_\tau}

\newcommand{\NetworkedMdpTuple}{\langle \mathrm{X}_d, \mathrm{Init}_d, \mathrm{Act}, \mathbb{A}, \mathbb{P}_d \rangle}

\title{Safe Networked Robotics with Probabilistic Verification}
\author{Sai Shankar Narasimhan$^{\ast}$, Sharachchandra Bhat$^{\ast}$, and Sandeep P. Chinchali
\thanks{Sai Shankar Narasimhan, Sharachchandra Bhat, and Sandeep P. Chinchali are with the Department of Electrical and Computer Engineering, The University of Texas at Austin, USA.  {\tt\footnotesize \{nsaishankar, sharachchandra, sandeepc\} @utexas.edu}. $^{\ast} $ denotes equal contribution.}%
}


\newtheorem{definition}{\textbf{Definition}}
\newtheorem{theorem}{\textbf{Theorem}}
\newtheorem{proposition}{\textbf{Proposition}}
\newtheorem{remark}{\textbf{Remark}}

\maketitle

\begin{abstract}
 Autonomous robots must utilize rich sensory data to make safe control decisions. To process this data, compute-constrained robots often require assistance from remote computation, or the cloud, that runs compute-intensive deep neural network perception or control models. However, this assistance comes at the cost of a time delay due to network latency, resulting in past observations being used in the cloud to compute the control commands for the present robot state. Such communication delays could potentially lead to the violation of essential safety properties, such as collision avoidance. This paper develops methods to ensure the safety of robots operated over communication networks with \emph{stochastic} latency. To do so, we use tools from formal verification to construct a shield, i.e., a run-time monitor, that provides a list of safe actions for any delayed sensory observation, given the expected and maximum network latency. Our shield is minimally intrusive and enables networked robots to satisfy key safety constraints, expressed as temporal logic specifications, with desired probability. We demonstrate our approach on a real F1/10th autonomous vehicle that navigates in indoor environments and transmits rich LiDAR sensory data over congested WiFi links.

\end{abstract}

\begin{IEEEkeywords}
Formal Methods in Robotics and Automation, Networked Robots, Teleoperation, Probabilistic Verification 
\end{IEEEkeywords}

\IEEEpeerreviewmaketitle

\section{Introduction}
\label{sec:intro}

\IEEEPARstart{T}{oday}, an increasing number of robotic applications require remote assistance, ranging from remote manipulation for surgery 
\cite{el2020review} to emergency take-over of autonomous vehicles \cite{el2022ar}. Teleoperation is even used to control food delivery robots from command centers hundreds of miles away \cite{cocofood}. In these scenarios, network latency is a key concern for safe robot operation since actuation based on delayed state information can lead to unsafe behavior.

\begin{figure}[!h]
    \centering
    \includegraphics[width=1.0\columnwidth]{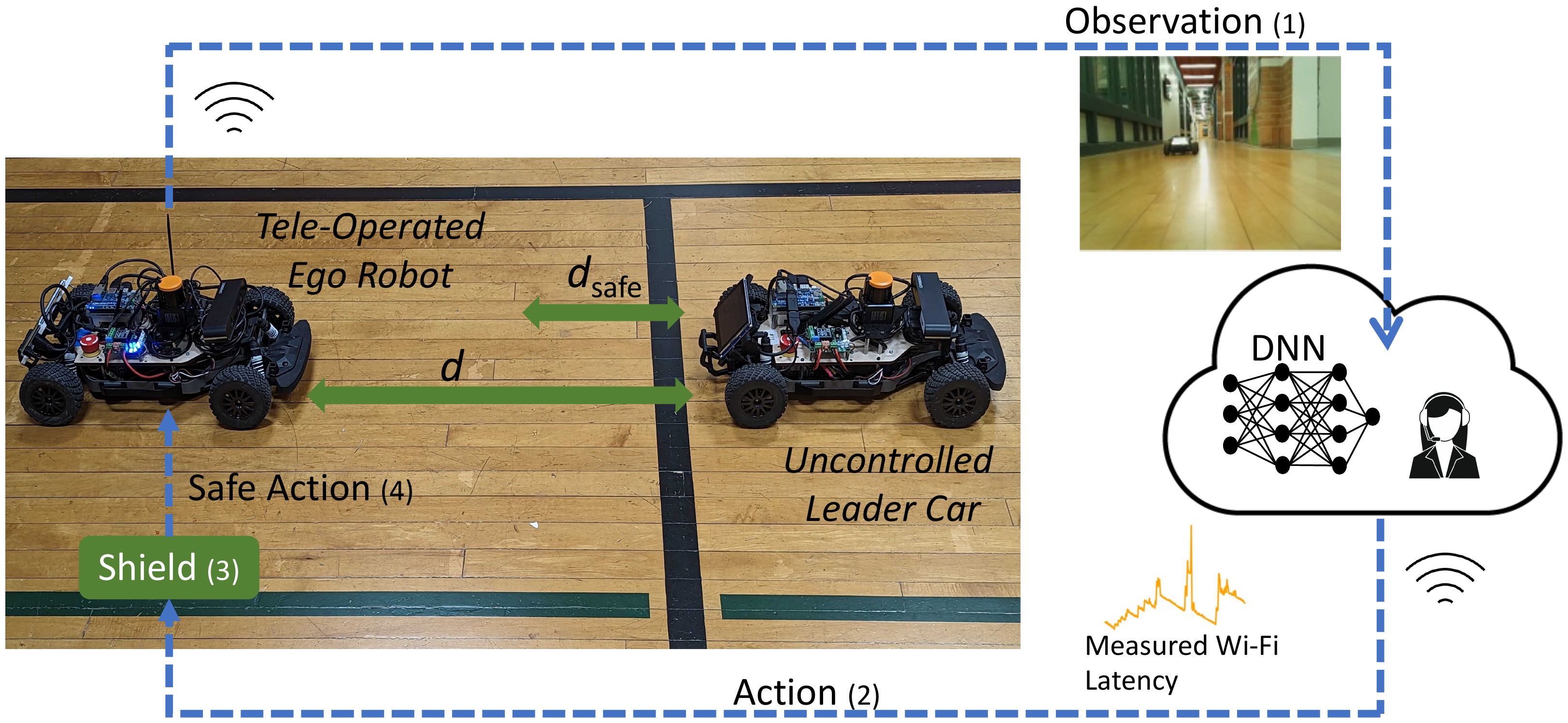}
    

    \caption{\textbf{Safe Networked Control for Robotics:} A resource-constrained robot transfers sensor observations (RGB-D images or LiDAR point clouds) through a wireless network with stochastic latency. At the receiving end, a control module or a human teleoperator processes the observation to generate the corresponding action. The action is filtered by the shield, which enforces a particular safety specification that the robot has to maintain. The filtered, ``safe'' action is then executed by the robot.}
    \label{fig:introduction_figure}
\end{figure}

Despite the rise of robots operating over communication networks, we lack formal guarantees for their safe operation. Today’s approaches for robotic safety range from reachability analysis \cite{bansal2017hamilton, Fisac2019BridgingHS, Cheng2019EndtoEndSR, choi2020reinforcement} to shielding that restricts unsafe actions based on a formal safety specification \cite{alshiekh2018safe, carr2023safe, jansen2020safe, konighofer2020shield}. However, there is little to no research that provides such rigorous safety analysis for networked robotics. This paper asks: \emph{How do we ensure safe networked control over wireless networks with stochastic communication delays?}

Communication delay is the cumulative time taken to send an observation to the cloud and receive an action back at the robot. We develop the intuition that if the interaction between a remotely controlled robot and its environment can be modeled as a Markov Decision Process (MDP), the communication delay is analogous to sensing or actuation delays. Previous works on Networked Control Systems (NCS) have addressed MDPs with delays \cite{adlakha2011networked,katsikopoulos2003markov,derman2020acting}, but often make restrictive assumptions about delay transitions. In our paper, we propose \coolname{}, a novel approach to model MDPs with delays that aligns naturally with the transmission of observations and control commands when operating a robot via wireless networks in practice.

Fig. \ref{fig:introduction_figure} shows our approach, tested on an F1/10th car \cite{o2019f1} controlled over a wireless link. Our approach is extremely general - we can either have a human teleoperator or an automatic controller running in the cloud, including Deep Neural Network (DNN) perception models or deep reinforcement learning (RL) based policies. First, sensor observations are transferred via wireless links (step 1) and processed to compute the corresponding control command (step 2). The control command is transmitted back to the robot and filtered by the shield. The shield is a run-time monitor, constructed offline, that disallows actions that violate a safety property. The shield has access to the delay corresponding to the received control command as it runs on the robot. Finally, the shielded action is executed to ensure safe behavior amidst stochastic network latency (steps 3-4). 
We design the shield using tools from formal verification \cite{baier2008principles}, given knowledge of the network latency and a model of the environment transitions.

Shields, as implemented in \cite{alshiekh2018safe}, provide an absolute measure of safety. For networked control with stochastic latency, shielding results in perfectly safe operation at the cost of task efficiency. In this paper, we propose a shield synthesis approach that, when combined with the cloud controller, allows the networked control system to meet safety requirements with a desired probability. Our experimental findings indicate that a slight reduction in the desired safety probability leads to a significant increase in task efficiency. In this paper,
\begin{enumerate}
    \item We present the \coolname, a novel approach that accurately models the interaction between a remotely controlled robot and the environment, in the presence of stochastic network latency.
    \item  We propose an algorithm to synthesize a shield that, when executed with the cloud controller, guarantees the desired probability of satisfying a safety property.  
    \item We demonstrate our approach in simulation as well as on an F1/10th autonomous vehicle that must closely (and safely) follow an unpredictable leader in indoor environments over congested wireless networks (Fig. \ref{fig:introduction_figure}). 
\end{enumerate}

\section{Related Work}
\label{sec:related_work}
We now survey how our work relates to cloud robotics, networked control systems, shielding, and formal methods. 

\emph{Cloud Robotics:} Cloud robotics \cite{kehoe2015survey,kuffner2010cloud} studies how resource-constrained robots can offload inference \cite{tanwani2019fog,chinchali2021network}, mapping, and control to remote servers \cite{mohanarajah2015rapyuta}. Recent work (\cite{tian2022mitigating}) circumvents network latency for teleoperation by predicting the intent of a teleoperator remotely and synthesizing trajectories locally on the robot for handwriting imitation. This approach does not scale well for resource-constrained robots for more complicated tasks like autonomous driving as it involves running DNNs for intent prediction.

\emph{Delayed MDPs:} Numerous prior works have addressed sensing and actuation delays in MDPs \cite{adlakha2011networked,katsikopoulos2003markov,derman2020acting,lancewicki2022learning}, by making restrictive assumptions about the delay transitions. The delay is constant in \cite{adlakha2011networked} and \cite{derman2020acting}, while it can only increase or remain constant between consecutive time steps in \cite{katsikopoulos2003markov}, halting the decision-making when the delay reaches the maximum limit. In contrast, we make no such assumptions. Additionally, prior works have focused on the delay in feedback \cite{lancewicki2022learning} or cost collection (\cite{katsikopoulos2003markov}, \cite{derman2020acting}) in the RL setting. Our objective is to \emph{formally verify the safety of NCS}.

\emph{Shielding and Safe Reinforcement Learning:} Our work builds upon safe RL techniques developed for discrete-time systems. The shielding approach (\cite{alshiekh2018safe, konighofer2020shield}) involves synthesizing a run-time monitor that overwrites the agent's action if it violates the desired safety specification, aiding in safe exploration \cite{carr2023safe}. Recent work relaxes assumptions on the knowledge of the environment and makes the shielding approach more practical \cite{pranger2021, konighofer2022online}. As the execution of perfectly safe policies restricts exploration in RL, probabilistic shields were introduced in \cite{jansen2020safe} and \cite{bouton2019reinforcement} to trade off safety for exploration. Building on this, recent work implements probabilistic shields through probabilistic logic programs \cite{yang2023safe}. The definitions of the probabilistic shield in \cite{jansen2020safe} and \cite{bouton2019reinforcement} are similar to ours, but these works do not provide theoretical guarantees for safety, which we do. Another probabilistic shielding approach \cite{aksaray2021probabilistically} focuses specifically on synthesizing shields that satisfy \emph{bounded} specifications. While the above-mentioned works deal with shielding for safe RL, our work focuses on developing a novel shielding approach for NCS.

 The previous approaches deal with finite state models obtained from an abstraction of the continuous state space. \cite{li2020robust} proposes an alternative approach using Robust Model Predictive Shielding. Further, Hamilton-Jacobi reachability analysis \cite{bansal2017hamilton, Fisac2019BridgingHS} and Control Barrier Function methods \cite{Cheng2019EndtoEndSR, choi2020reinforcement} formulate the safe control problem for the continuous system. These methods cannot express rich safety properties, such as ``maintain a minimum distance between two vehicles when the delay is above a threshold and visit the landmark before reaching the goal'', which is possible in our approach.

\section{Background}
\label{sec:background}

A \emph{Markov Decision Process} (MDP) is a tuple $\mdptuple$, where $\stateset$ is a finite state set, $\initset$ is a probability distribution over $\stateset$ representing the initial state distribution, and $\mathrm{Act}$ is a finite set of actions. The transition probability function $\mdptransition$ is a conditional probability distribution and hence satisfies $\sum_{{s}'\in \mathrm{S}}\mathbb{P}({s'}\mid s,a) = 1$ for every state-action pair $(s,a) \in \stateset \times \mathbb{A}(s)$, where $\mathbb{A}(s)=\left \{a \in \mathrm{Act} \ | \ \exists \ s' \in \stateset \ s.t. \  \mathbb{P}({s}' \mid s,a) \neq 0 \right \}$ is the set of available actions for the state $s$. A policy $\pi$ is defined as a mapping from states to actions, $\pi: \stateset \rightarrow \mathrm{Act}$.

We will now introduce safety properties and our approach using the hardware setup in Fig. \ref{fig:introduction_figure}, where a resource-constrained mobile robot must safely follow an unpredictable leader while being controlled remotely over a wireless link with stochastic network delays. Henceforth, we will use the term \emph{agent} to indicate any controlled entity like the robot and \emph{environment} for uncontrolled entities (like the leader car). The agent and the environment together form the \emph{system}. 

To capture our desired notion of safety for the system, we first define $\bad$ to be the set of all \emph{unsafe states}. For example, in our hardware setup, an unsafe state is one where the distance between the two cars, $d$, is less than the safety threshold $d_{\mathrm{safe}}$ (Fig. \ref{fig:introduction_figure}). Then, we define the system to be safe if it never reaches any state in $\bad$. This can be encapsulated by the Linear Temporal Logic (LTL) \cite{baier2008principles} safety specification $\square \neg \bad$, which translates to ``always ($\square$) never ($\neg$) be in an unsafe state". Note that our notion of safety is now equivalent to determining the probability with which the system \emph{satisfies the safety specification} $\varphi = \square \neg \bad$, which can be done efficiently. We use $\piprphi(s)$ to represent the probability of satisfying $\varphi$, while executing the policy $\pi$ starting from the state $s \in \stateset$. The probability with which the system satisfies $\varphi$ is then given by the expectation of $\piprphi(s)$ over $\initset$. To compute $\piprphi$, we note that the safety specification $\varphi = \square \neg \bad$ can be cast into a reachability specification $ \theta = \diamond \bad$, which refers to ``eventually ($\diamond$) reach any unsafe state''. Now, the probability of satisfying this reachability specification, $\piprpsi(s)$, is the unique solution to the following system of equations \cite{baier2008principles}:

\begin{equation}
\begin{gathered}
    \textrm{if } s \in \bad \Rightarrow \piprpsi(s) = 1; \textrm{if }s \not \models \theta \Rightarrow \piprpsi(s) = 0,\\
    \textrm{else } \piprpsi(s) = \mathbb{E}_{{s}' \sim \mathbb{P}({s}' \mid s,\pi(s))} \left [ \piprpsi({s}') \right ].
\end{gathered}
\label{eq:reachability_analysis}
\end{equation}
This can be solved using value iteration. Then, the safety probabilities can be computed using the relation $\piprphi(s) = 1 - \piprpsi(s) \ \forall s \in \stateset$. We denote the minimum and maximum safety probabilities across any policy as $\minprphi(s)$ and $\maxprphi(s)$, respectively. We refer the readers to \cite{baier2008principles} for details on how they can be computed.  We also denote the minimum and maximum safety probabilities for a \emph{state-action pair} $(s,a) \in \stateset \times \mathbb{A}(s)$ by $\minqphi$ and $\maxqphi$ respectively. For example, $\maxqphi$ is computed as
\begin{equation}
\maxqphi = \mathbb{E}_{{s}' \sim \mathbb{P}({s}' \mid s,a)} \left [ \maxprphi({s}') \right ].
 \label{eq:qmax_computation}
\end{equation}
The above discussion can be extended to reach-avoid specification, $\neg \ \bad \ \cup \mathrm{goal}$, which translates to ``never ($\neg$) be in an unsafe state until ($\cup$) the goal state ($\mathrm{goal}$) is reached". We note that the reach-avoid specifications can also be cast into a reachability specification, and refer the readers to \cite{baier2008principles} for further details. We denote the policy corresponding to the maximum safety probability, $\maxprphi(s)$, as the optimally safe policy $\optsafepolicy$, defined as $\optsafepolicy(s) = \operatorname{argmax}_a \maxqphi$. These quantities are necessary to define our \emph{shield} that can ensure a desired safety probability $\delta$ for the networked control system. For an MDP $\mathcal{M} = \mdptuple$, a shield (\cite{alshiekh2018safe, jansen2020safe}) is a function, $\mathrm{C}: \stateset \rightarrow 2^{\mathrm{Act}}$, that maps every state $s \in \mathrm{S}$ to a subset of $\mathbb{A}(s)$. During runtime, the shield overwrites the policy only if $\pi(s) \not \in \mathrm{C}(s)$.

\section{Problem Formulation}
\label{sec:prob_formulation}
In this section, we formally define our safe networked control problem. We make the following three key assumptions:
\begin{itemize}[leftmargin=*]
    \item The agent-environment interaction is available as an MDP $\basicmdp = \mdptuple$, where the state and action sets are discrete and finite. For the continuous case, we obtain finite sets by abstracting the continuous state and action spaces. We term this as the \emph{Basic} MDP. In our hardware setup, the state set $\mathrm{S}$ consists of bins of possible distances between the cars, the action set $\mathrm{Act}$ consists of bins of allowed ego-robot velocities and the transition probability function $\mathbb{P}$ captures the leader's unpredictability modeled using an assumed range of velocities. This is a standard assumption since the offline computation of safe control policies typically require knowledge of the agent-environment interaction \cite{bansal2017hamilton, jansen2020safe, bouton2019reinforcement}.
    \item We assume a sufficient understanding of the stochasticity in communication delay, which we model as a transition probability function $\latencymodel$ with an upper bound $\maxcommdelay$ on the delay. Later, in Sec. \ref{sec:experiments}, we show how to obtain $\latencymodel$ from the collected time-series datasets of communication delays. 
    \item Finally, we assume the cloud controller $\picloud$ is available as a mapping from the state set $\mathrm{S}$ to the action set $\mathrm{Act}$ for the \emph{Basic} MDP. For the discrete case, this mapping is trivial as it is $\picloud$ itself. For the continuous case, the mapping can be easily obtained even for complex DNN controllers \cite{liu2021algorithms}. Later, we show how to relax this assumption for cases like human-teleoperation. Note that the cloud controller $\picloud$ is unaware of the communication delay.
\end{itemize}

We now explain the practical effects of delays on NCS. Consider an agent sending timestamped observations to the cloud. The cloud processes these observations to extract the system state information, generates a corresponding action, and appends the same timestamp to it before sending it back to the agent. We define communication delay as the time difference between the current time and the timestamp of the received action. Formally, at time $t$, the communication delay is $\tau_t$ if the received action, $a_t$, corresponds to the delayed state $s_{t-\tau_t}$. We refer to $s_{t-\tau_t}$ as the latest available system state at time $t$. Between two consecutive time steps $t$ and $t+1$, only one of the following three events can occur.
\begin{itemize}[leftmargin=*]
    \item \emph{Case 1:} The agent receives no action from the cloud. This implies that the latest available system state at $t+1$ is still $s_{t-\tau_t}$ and the delay $\tau_{t+1} = \tau_t + 1.$ 
    \item \emph{Case 2:} The agent receives an action with a timestamp equal to the current time, implying no delay, i.e., $\tau_{t+1} = 0.$
    \item \emph{Case 3:} The agent receives an action with an older timestamp, implying $\tau_{t+1} > 0$ and $\tau_{t+1} \leq \tau_t$. 
\end{itemize}

To model these events, we represent the delay transitions as a conditional probability distribution $\ptau(\tau_{t+1} \mid \tau_t)$, $\delaytransition$
where $\Omega = \left \{0,1,.., \maxcommdelay  \right \}$ is the set of integer delay values. As the delay cannot increase by more than 1 (\emph{Case 1}), we have  $\ptau(\tau_{t+1} \mid \tau_t)=0$ if $\tau_{t+1} > \tau_t + 1$.

\emph{\textbf{Problem: }}We are given the \emph{Basic} MDP $\basicmdp$, the delay transition probability function $\ptau$ with an upper bound on delay $\maxcommdelay$ and the cloud controller $\picloud$. Our aim is to ensure safe networked control such that the system satisfies the safety specification $\square \neg \bad$ with probability $\delta$, where $\bad$ denotes the unsafe states.
 
\section{Approach}
\label{sec:approach}
Our approach to safe networked control is based on shield construction. 
The shield construction for safe networked control requires an MDP that is cognizant of the delay. However, the \emph{Basic} MDP $\basicmdp$ does not account for any delay. Therefore, from $\basicmdp$, we first create a \coolname{} (\coolnameabbr{}) that accounts for the stochastic communication delay. This is outlined in Sec. \ref{sec:dcmdp}. Consequently, in Sec. \ref{sec:guaranteed_shield}, we describe our approach for shield construction for any MDP and any safety specification.

\subsection{Delayed Communication Markov Decision Processes}
\label{sec:dcmdp}
To design the \coolnameabbr{} from the \emph{Basic} MDP $\basicmdp$, we first note that in the presence of delay, the state transition model can no longer rely only on the current state $s_t$ and the current executed action $a_t$ to determine the next state $s_{t+1}$. This is because $s_t$ is not known when the delay is not zero. For a system with delay $\tau_t$ at time $t$, the maximum information available about the system at $t$ is the latest observed system state $s_{t-\tau_t}$ and the action buffer, i.e., sequence of actions executed from $t-\tau_t$ to $t-1$, $a_{t-\tau_t}, \dots ,a_{t-1}$. Therefore, determining whether an action $a_t$ is safe with respect to the property $\square \neg \bad$ should intuitively rely on $s_{t-\tau_t}$ and $a_{t-\tau_t}, \dots, a_{t-1}$. Hence, we incorporate this maximum information available about the system into the state of the \coolnameabbr. Note that the action buffer's length is the delay $\tau_t$, which can vary. So, we introduce $\maxcommdelay - \tau_t$ number of place-holder actions, $\phi$, to ensure the action buffer's length is always $\maxcommdelay$. Now, we define the state at time $t, x_t$, as $(s_{t-\tau_t}, (a_{t-\tau_t}, \dots, a_{t-1}, \phi, \dots,\phi), \tau_t)$ and the state space for the \coolnameabbr{} as $\mathrm{X}_d \in \stateset \times (\mathrm{Act} \cup \{ \phi \})^{\maxcommdelay} \times \Omega$, where $\Omega = \left \{0,1, \dots, \maxcommdelay  \right \}$ is the set of all possible delays. 
Without loss of generality, the initial delay, $\tau_0$ is $0$, i.e., the latest available system state at the beginning of any task execution is the initial system state. Let $s_{\mathrm{Init}}$ be the set of initial states for $\basicmdp$. Then we define the initial state probability distribution of the \coolnameabbr{}, $\mathrm{Init}_d$, to only have non-zero probabilities for the states in the list $s_{\mathrm{Init}} \times \left\{(\phi, \phi, \dots,\phi)\right\} \times \left \{0  \right \}$.

For the state $x_t = (s_{t-\tau_t}, (a_{t-\tau_t}, \dots, a_{t-1}, \phi, \dots, \phi), \tau_t)$ and the action $a_t$, we now relate the three possible events described in Sec. \ref{sec:prob_formulation} to the state transitions in $\mathrm{X}_d$. 

\begin{itemize}[leftmargin=*]
    \item \emph{Case 1:} $\tau_{t+1} = \tau_t+1$. The latest available system state remains the same. Thus $x_{t+1}$ is $(s_{t-\tau_{t}}, (a_{t-\tau_{t}}, \dots,a_{t}, \phi, \dots,\phi), \tau_{t}+1)$. The occurrence of this event is governed only by the delay transition, hence the probability of this event is $\ptau(\tau_t+1 \mid \tau_t)$. 
    \item \emph{Case 2:} $\tau_{t+1} = 0$. The latest available system state is the current system state $s_{t+1}$. Thus $x_{t+1}$ is $(s_{t+1}, (\phi, \dots, \phi), 0)$. This event depends on the delay transition with probability $\ptau(0 \mid \tau_t)$ and the system transition from $s_{t-\tau_t}$ to $s_{t+1}$ by executing $\tau_t+1$ actions $a_{t-\tau_t}, \dots, a_t$, with probability $\mathbb{P}(s_{t+1} \mid s_{t-\tau_t}, a_{t-\tau_t}, \dots, a_t)$.
    \item \emph{Case 3:} $0 < \tau_{t+1} \leq \tau_{t}$. The latest available system state is delayed by $\tau_{t+1}$. Thus $x_{t+1}$ is $(s_{t+1-\tau_{t+1}}, (a_{t+1-\tau_{t+1}}, \dots, a_t, \phi, \dots, \phi), \tau_{t+1})$. Similar to \emph{Case 2}, the occurrence of this event is governed by the delay transition with probability $\ptau(\tau_{t+1} \mid \tau_t)$ and the system transition with probability $\mathbb{P}(s_{t+1-\tau_{t+1}} \mid s_{t-\tau_t}, a_{t-\tau_t}, \dots, a_{t-\tau_{t+1}})$. 
\end{itemize}

\begin{equation}
\begin{gathered}
\hspace{-5.5cm}\mathbb{P}_d(x_{t+1} \mid x_t, a_t) = \\
\left\{
\begin{matrix}

\hspace{-5.8cm}\ptau(\tau_{t+1} \mid \tau_t),\\ 

\hspace{-5.3cm} \textrm{if}\;  \tau_{t+1}=\tau_t+1\\ 

\vspace{0.3cm}

\hspace{-0.1cm}x_{t+1}=(s_{t-\tau_{t}}, (a_{t-\tau_{t}}, \dots,a_{t}, \phi, \dots,\phi), \tau_{t}+1)\\ 

\hspace{-1.9cm}\ptau(\tau_{t+1} \mid \tau_t)
\hspace{-0.5cm}
\underbrace{\sum_{s_{t-\tau_t+1} \in \stateset}
y_{t-\tau_t}
\dots \sum_{s_{t} \in \stateset} y_{t-1} \ y_t
}_{\tau_t + 1\; \textrm{terms}}
,\\ 
\vspace{0.3cm} 
\hspace{-1.8cm} \textrm{if} \;\tau_{t+1} =0, x_{t+1}=(s_{t+1},(\phi, \dots,\phi), 0) \\ 

\vspace{0.0cm} 

\hspace{0.1cm}\ptau(\tau_{t+1} \mid \tau_t)
\hspace{-0.5cm}
\underbrace
{\sum_{s_{t-\tau_{t}+1} \in \stateset} y_{t-\tau_t} \dots
\hspace{-0.25cm}
\sum_{s_{t-\tau_{t+1}} \in \stateset}
y_{t-\tau_{t+1}-1} \ y_{t-\tau_{t+1}}
}_{\tau_t - \tau_{t+1} + 1\; \textrm{terms}}
,\\  
 
\hspace{-5.2cm} \textrm{if} \; 0 < \tau_{t+1} \leq \tau_{t}\\ 

\vspace{0.3cm}
\hspace{0.3cm}x_{t+1}=(s_{t+1-\tau_{t+1}}, (a_{t+1-\tau_{t+1}}, \dots,a_{t}, \phi,\dots,\phi), \tau_{t+1}) \\

\hspace{-5.5cm} 0 \;\;\;\;\;\; \textrm{otherwise},
\end{matrix}\right.
\end{gathered}
\label{eq:random_td_transition_eq}
\end{equation}

Consequently, we define the transition probability function for the \coolnameabbr{} $\mathbb{P}_d: \mathrm{X}_d \times \mathrm{Act} \times \mathrm{X}_d \rightarrow [0,1]$ as shown in Eq. \ref{eq:random_td_transition_eq}. In Eq. \ref{eq:random_td_transition_eq}, $y_{t-\tau_t} = \mathbb{P}(s_{t-\tau_t+1} \mid s_{t-\tau_t}, a_{t-\tau_t})$ is the one-step transition probability from the system state $s_{t-\tau_t}$ to $s_{t-\tau_t+1}$ while executing the action $a_{t-\tau_t}$. We note that the system transition probabilities in \emph{Case 2} and \emph{Case 3} can be factorized into the $\tau_t + 1$ and $\tau_t - \tau_{t+1} + 1$ terms in Eq. \ref{eq:random_td_transition_eq} respectively. Now, we prove that $\mathbb{P}_d$ is a valid conditional probability distribution with support over the state space $\mathrm{X}_d$. First we note that since the conditional distributions $\ptau, y_{t} \geq 0$, $\mathbb{P}_d \geq 0$. Next, we show that $\sum_{x_{t+1} \in \mathrm{X}_d} \mathbb{P}_d(x_{t+1} \mid x_t, a_t) = 1$. Substituting the transition probabilities from \emph{Cases 2,3} in place of the factorized terms in Eq. \ref{eq:random_td_transition_eq}, we get

\vspace{-3mm}
\begin{equation}
\begin{gathered}
\hspace{-2cm}\sum_{x_{t+1} \in \mathrm{X}_d} \mathbb{P}_d(x_{t+1} \mid x_t, a_t) = \ptau(\tau_t + 1 \mid \tau_t) +\\
\begin{matrix}

\hspace{-1cm}\ptau(0 \mid \tau_t)\sum\limits_{s_{t+1}} \mathbb{P}(s_{t+1} \mid s_{t-\tau_t}, a_{t-\tau_t}, \dots, a_t)+\\ 

\hspace{0.2cm}\sum\limits_{\tau' = 1}^{\tau_t}\ptau(\tau' \mid \tau_t) \sum\limits_{s_{t-\tau'}} \mathbb{P}(s_{t-\tau'} \mid s_{t-\tau_t}, a_{t-\tau_t}, \dots, a_{t-\tau'-1}).

\end{matrix}
\end{gathered}
\label{eq:random_td_transition_proof}
\end{equation}

The inner summations in the second and third terms of the right-hand side of Eq. \ref{eq:random_td_transition_proof} equate to 1. Hence, $\sum_{x_{t+1} \in \mathrm{X}_d} \mathbb{P}_d(x_{t+1} \mid x_t, a_t) = \sum_{\tau' \in \Omega} \ptau(\tau' \mid \tau) = 1$.

Thus, the \coolnameabbr{} $\mathcal{M}_d$ is the tuple $\NetworkedMdpTuple$. From the definition of the state space of the \coolnameabbr{}, we denote the unsafe states for the \coolnameabbr{}, $\mathrm{X_{unsafe}}$, as a subset of $\bad \times (\mathrm{Act} \cup \{ \phi \})^{\maxcommdelay} \times \Omega$, where $\bad$ is the unsafe states set for the \emph{Basic} MDP. In other words, the \coolnameabbr{} state at time $t$, $x_t = (s_{t-\tau_t}, (a_{t-\tau_t}, \dots, a_{t-1}, \phi, \dots, \phi), \tau_t)$, is unsafe if $s_{t-\tau_t} \in \bad$. Additionally, we show how to construct the \coolnameabbr{} when only $\maxcommdelay$ is known, and $\ptau$ is not. Since the delay is upper-bounded by $\maxcommdelay$, the action corresponding to the observation $s_{t-\maxcommdelay}$ is always available at timestep $t$. Therefore, we take $s_{t-\maxcommdelay}$ as the latest available system state and consider the delay to be a constant and equal to $\maxcommdelay$. Consequently, the initial delay is set to $\maxcommdelay$ and the action buffer is set to $\left\{(a_s, a_s,...,a_s)\right\}$, where $a_s$ is the action that does not affect the agent's state. In our hardware setup, $a_s$ is the ego-velocity of 0 m/s.

\subsection{Shield Design for Safe Networked Control}
\label{sec:guaranteed_shield}

In this section, we show how to construct a shield for any MDP $\mathcal{M} = \mdptuple$, a specification $\varphi = \square \neg \bad$ and a policy $\pi$. The shield should ensure that when $\pi$ is executed in the presence of the shield, the initial state distribution should satisfy $\varphi$ with at least the desired safety probability $\delta$. First, we formally define the shield.

\begin{definition}The $\epshield$, $\epshieldsym: \mathrm{S} \rightarrow 2^{\mathrm{Act}}$, for any state $s \in \mathrm{S}$, and $\epsilon \in [0,1]$ is
\end{definition}
\vspace{-4mm}
\begin{equation}
    \epshieldsym(s)=\left\{\begin{matrix}
\{ a \mid \maxqphi \geq \epsilon \} & \mathrm{if} \; \maxprphi(s) \geq \epsilon,\\ 
\{\operatorname*{argmax}_a \maxqphi\} & \mathrm{if} \; \maxprphi(s) < \epsilon.
\end{matrix}\right.
\label{eq:eps_safe_eqn}
\end{equation}

During run-time, the action executed is different from $\pi(s)$ only if $\pi(s) \not \in \epshieldsym(s)$; in which case an action from $\epshieldsym(s)$ is chosen. Hence, the $\epshield$ is \emph{minimally intrusive}. Now, we show there exists $\epsilon$ such that the run-time monitoring of $\pi$ by the $\epshield$ $\epshieldsym$ provides a safety probability greater than or equal to $\delta$ for the initial states.

\begin{definition}The modified policy, $\pi_\epsilon: \mathrm{S} \rightarrow \mathrm{Act}$, for any state $s \in \mathrm{S}$, policy $\pi$, and $\epshield$ $\epshieldsym$ is
\end{definition}
\vspace{-4mm}
\begin{equation}
    \pi_\epsilon(s)=\left\{\begin{matrix}
 \pi(s) & \mathrm{if} \; \pi(s) \in \epshieldsym(s),\\ 
\text{pick from } \epshieldsym(s) & \mathrm{if} \; \pi(s) \not \in \epshieldsym(s).
\end{matrix}\right.
\label{eq:pi_epsilon_defn}
\end{equation}
Observe that the \emph{modified policy} $\pi_\epsilon$ is a result of the run-time monitoring of $\pi$ by the $\epshield$ $\epshieldsym$. In other words, $\pi_\epsilon$ is the policy that is executed during networked control.

\begin{proposition}
The safety probability for a state $s \in \stateset$ while executing $\pi_\epsilon$, $\piprphieps(s)$, is lower bounded by $\minprphieps(s)$ where the MDP $\mathcal{M}_\epsilon = \epsmdptuple$. The lower bound $\minprphieps(s)$ is a non-decreasing function of $\epsilon$.
\label{proposition:lower_bound}
\end{proposition}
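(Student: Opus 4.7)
The plan is to establish the two claims in sequence, both of which reduce to observations about the shield as a set-valued map, followed by a standard MDP argument about restricting action sets.

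For the lower bound $\piprphieps(s) \geq \minprphieps(s)$, I would first verify that $\pi_\epsilon$ is a valid policy in the restricted MDP $\mathcal{M}_\epsilon = \epsmdptuple$. By the definition in Eq.~\eqref{eq:pi_epsilon_defn}, $\pi_\epsilon(s) \in \epshieldsym(s)$ for every $s \in \stateset$, so $\pi_\epsilon$ selects only shield-admissible actions. Next I would observe that $\mathcal{M}_\epsilon$ inherits the transition kernel $\mathbb{P}$ and the unsafe set $\bad$ from $\mathcal{M}$, differing only by restricting the available action set at each $s$ from $\mathbb{A}(s)$ to $\epshieldsym(s)$. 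Therefore the probability of satisfying $\varphi = \square \neg \bad$ from $s$ under $\pi_\epsilon$ is the same whether evaluated in $\mathcal{M}$ or in $\mathcal{M}_\epsilon$, i.e.\ $\piprphieps(s) = \mathrm{V}_{\mathcal{M}_\epsilon, \varphi}^{\pi_\epsilon}(s)$. Since $\minprphieps(s)$ is by definition the infimum of this quantity over all policies of $\mathcal{M}_\epsilon$, and $\pi_\epsilon$ is one such policy, the inequality follows immediately.

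For monotonicity, I would prove the stronger statement that $\mathrm{C}_{\epsilon_2}(s) \subseteq \mathrm{C}_{\epsilon_1}(s)$ whenever $\epsilon_1 \leq \epsilon_2$, and then transfer this to $\minprphieps$. The inclusion follows from a short case analysis on the two branches of Eq.~\eqref{eq:eps_safe_eqn}. When both $\epsilon_1, \epsilon_2$ satisfy $\epsilon_i \leq \maxprphi(s)$, the inclusion is immediate from monotonicity of the super-level set $\{a : \maxqphi \geq \epsilon\}$ in $\epsilon$. When both satisfy $\epsilon_i > \maxprphi(s)$, both shields equal the singleton $\{\operatorname{argmax}_a \maxqphi\}$. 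The only nontrivial case is $\epsilon_1 \leq \maxprphi(s) < \epsilon_2$, where I would note that $\operatorname{argmax}_a \maxqphi$ attains the value $\maxprphi(s) \geq \epsilon_1$ and thus lies in the first-branch set for $\epsilon_1$, yielding $\mathrm{C}_{\epsilon_2}(s) \subseteq \mathrm{C}_{\epsilon_1}(s)$.

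Given this set inclusion at every state, every policy feasible in $\mathcal{M}_{\epsilon_2}$ is also feasible in $\mathcal{M}_{\epsilon_1}$, so the minimization defining $\minprphieps$ runs over a smaller family of policies at $\epsilon_2$ than at $\epsilon_1$. A minimum over a smaller set cannot be smaller, giving $\mathrm{V}_{\mathcal{M}_{\epsilon_1}, \varphi}^{\mathrm{min}}(s) \leq \mathrm{V}_{\mathcal{M}_{\epsilon_2}, \varphi}^{\mathrm{min}}(s)$, which is the claimed monotonicity. The main obstacle I anticipate is precisely the boundary case where $\epsilon$ crosses $\maxprphi(s)$ and the shield definition switches branches; handling this case carefully is what keeps the monotonicity argument from breaking, whereas the remainder of the proof is standard reasoning about value functions under action-set restriction.
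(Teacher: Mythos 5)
Your proof is correct and follows essentially the same route as the paper: the lower bound is obtained by recognizing $\pi_\epsilon$ as a policy of the action-restricted MDP $\mathcal{M}_\epsilon$, and monotonicity is derived from the antitonicity of the shield sets, $\mathrm{C}_{\epsilon_2}(s) \subseteq \mathrm{C}_{\epsilon_1}(s)$ whenever $\epsilon_1 \leq \epsilon_2$. The only difference is presentational: the paper casts the second step as a proof by contradiction and asserts the set inclusion directly ``from the definition,'' whereas you argue forward and actually verify the inclusion case by case, including the branch-crossing case $\epsilon_1 \leq \maxprphi(s) < \epsilon_2$ where one must check that the $\operatorname{argmax}$ action lies in the first-branch set for $\epsilon_1$ --- a detail the paper leaves implicit.
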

\begin{proof}First, we note that executing $\pi_{\epsilon}$ for $\mathcal{M}$ is equivalent to executing $\pi$ for $\mathcal{M}_\epsilon = \epsmdptuple$, where the allowed action set for each state $s$ is given by $\mathrm{C}_\epsilon(s)$. Hence, the minimum safety probability for $\mathcal{M}_\epsilon$ denoted by $\minprphieps(s)$ is the lower bound for $\piprphieps(s)$. Now, we show by contradiction that for $\epsilon, \bar{\epsilon} \in [0,1]$ and $\epsilon < \bar{\epsilon}$, $\minprphieps(s) \leq \minprphiepsbar(s)$ for any state $s \in \stateset$. Assume for the two MDPs, $\mathcal{M}_\epsilon$ and $\mathcal{M}_{\bar{\epsilon}}$, $\minprphieps(s) > \minprphiepsbar(s)$ for some state $s \in \stateset$. This implies that the policy that corresponds to $\minprphiepsbar(s)$ does not exist for $\mathcal{M}_\epsilon$, and hence $\mathrm{C}_\epsilon(s') \subset \mathrm{C}_{\bar{\epsilon}}(s')$ for some $s' \in \stateset$. But,  from the definition of $\epshield$, if $\epsilon < \bar{\epsilon}$, then $\mathrm{C}_\epsilon(s) \supseteq \mathrm{C}_{\bar{\epsilon}}(s) \ \ \forall s \in \stateset$, which is a contradiction. Thus, the lower bound on  $\piprphieps(s)$ is a \emph{non-decreasing} function of $\epsilon$. 
\end{proof}

\begin{remark}For the MDP $\mathcal{M} = \mdptuple$ and the safety property $\varphi$, note that the safety probability for any $s \in \stateset$ is upper-bounded by $\maxprphi(s)$. So, for the initial state distribution, the upper bound on the safety probability is $\maxinitsafety$. Hence, any choice of the desired safety probability $\delta$ should satisfy $\delta \leq \maxinitsafety$.
\label{remark:safety_upper_bound}
\end{remark}

\begin{algorithm}
	\caption{Shield Design}
    \textbf{Input:} MDP $\mathcal{M} = \mdptuple$, policy $\pi$, safety specification $\varphi=\square \neg \bad$, desired safety probability $\delta$ \\
    \textbf{Output:} $\epshield$, $\epstarshieldsym$. 
	\begin{algorithmic}[1]
        \State Initialize $\epshield, \epstarshieldsym(s) = \mathbb{A}(s) \ \ \forall s \in \stateset.$ 
        \State Compute $\maxqphi$ for all state-action pairs in $\mathcal{M}$.
        \For {$\epsilon \leftarrow [0, \eta, 2\eta, \dots,1]$} 
            \State Determine $\epshieldsym(s)$ for each state $s \in \stateset$ as in Eq. \ref{eq:eps_safe_eqn} 
            \State Determine the \emph{modified policy} $\pi_\epsilon$ as in Eq. \ref{eq:pi_epsilon_defn}.
            \State Compute $\piprphieps(s)$ for all states in $\mathrm{S}$ as in Sec. \ref{sec:background}.
            \If{$\actinitsafetyeps \geq \delta$}
                 $\epstarshieldsym = \epshieldsym$
                 \State \textbf{break}
            \EndIf
        \EndFor
        \State \textbf{return} $\epstarshieldsym$.
    \end{algorithmic}       
\label{alg:shield_design}
\end{algorithm}

The Algorithm \ref{alg:shield_design} takes as input the MDP $\mathcal{M}$, policy $\pi$, specification $\varphi$, and desired safety probability $\delta$, and outputs the synthesized $\epshield$ $\epstarshieldsym$. In line 2, we compute the maximum safety probability for all state-action pairs in $\mathcal{M}$, as explained in Sec. \ref{sec:background}, Eq. \ref{eq:qmax_computation}. Then, we gradually (based on the granularity $\eta$) vary the parameter $\epsilon$ from 0 to 1 until the safety probability for the initial state distribution, while executing the \emph{modified policy} $\pi_\epsilon$, is greater than or equal to the desired safety probability $\delta$ (lines 3-10).

\begin{theorem} (Termination with guaranteed safety). For a given MDP $\mathcal{M}$ and a policy $\pi$, Algorithm \ref{alg:shield_design} always terminates with an $\epshield$, $\epstarshieldsym$ as in Eq. \ref{eq:eps_safe_eqn}, such that the modified policy $\pi_{\epsilon}$, a combination of $\pi$ and $\epstarshieldsym$ (Eq. \ref{eq:pi_epsilon_defn}), satisfies the safety property $\varphi = \square \neg \bad$ for the initial state distribution with a probability greater than or equal to the desired safety probability $\delta$, where $\delta \leq \maxinitsafety$. \label{theorem:guaranteed_safety}\end{theorem}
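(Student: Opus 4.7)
The plan is to prove termination by identifying a distinguished value of $\epsilon$, namely $\epsilon = 1$, at which the shield is guaranteed to force the modified policy to achieve the maximum possible safety probability, and then leverage the monotonicity established in Proposition \ref{proposition:lower_bound} to conclude that the loop's termination condition is met at or before that value.

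First, I would analyze the shield $\epshieldsym$ at $\epsilon = 1$. At each state $s \in \stateset$, one of the two branches of Eq. \ref{eq:eps_safe_eqn} applies: either $\maxprphi(s) = 1$, in which case $\epshieldsym(s) = \{a \mid \maxqphi \geq 1\}$ is precisely the set of actions attaining the maximum safety probability at $s$; or $\maxprphi(s) < 1$, in which case $\epshieldsym(s) = \{\operatorname{argmax}_a \maxqphi\}$, again an action attaining the maximum. In both cases, every action in $\mathrm{C}_1(s)$ realizes $\maxqphi = \maxprphi(s)$, so the modified policy $\pi_1$ coincides (in terms of safety value) with the optimally safe policy $\optsafepolicy$. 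Consequently $\mathrm{V}_{\mathcal{M},\varphi}^{\pi_1}(s) = \maxprphi(s)$ for all $s$, and taking expectation over $\initset$ gives $\piepsinitsafety = \maxinitsafety \geq \delta$, by the assumption of the theorem. Thus the condition on line 7 of Algorithm \ref{alg:shield_design} is satisfied at $\epsilon = 1$.

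Next, I would appeal to Proposition \ref{proposition:lower_bound}, which states that $\minprphieps(s)$ is a non-decreasing function of $\epsilon$ and lower-bounds $\piprphieps(s)$. Together with the previous step, this tells us that as $\epsilon$ is swept through the discrete grid $\{0, \eta, 2\eta, \dots, 1\}$, the safety probability $\actinitsafetyeps$ is monotonically non-decreasing and is at least $\delta$ by the time $\epsilon = 1$ is reached. Since the grid terminates at $\epsilon = 1$, the loop must exit with some $\epsilon^\ast \leq 1$ on line 8, producing a well-defined $\epstarshieldsym$. The modified policy $\pi_{\epsilon^\ast}$ then satisfies $\varphi$ with probability at least $\delta$ over $\initset$ by the very condition that triggered termination.

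The main obstacle I would expect is the subtlety at $\epsilon = 1$ when $\maxprphi(s) = 1$: the first branch of Eq. \ref{eq:eps_safe_eqn} may admit more than one action, and we must verify that every admitted action yields the maximum safety probability rather than just some of them. This is immediate from the definition (actions with $\maxqphi \geq 1$ necessarily achieve $\maxqphi = 1 = \maxprphi(s)$), but it needs to be stated explicitly so the collapse $\pi_1 \equiv \optsafepolicy$ (in value) is clean. A minor secondary point is that the discretization $\eta$ must be chosen so that $1$ lies on the grid (which the algorithm description already assumes); otherwise termination is still guaranteed at the last iterate, whose safety value is lower bounded by that at $\epsilon = 1$ by monotonicity, but this slight gap is worth noting.
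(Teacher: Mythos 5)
Your proof is correct and follows essentially the same route as the paper: both arguments hinge on showing that at $\epsilon=1$ the shield admits only actions attaining $\maxprphi(s)$, so the modified policy coincides in value with $\optsafepolicy$ and the loop condition holds at the final grid point, with Proposition \ref{proposition:lower_bound}'s monotonicity supplying the supporting structure. One small caution: Proposition \ref{proposition:lower_bound} only gives monotonicity of the lower bound $\mininitsafetyeps$, not of $\actinitsafetyeps$ itself (which can depend on how actions are picked from $\epshieldsym(s)$ when $\pi(s)\not\in\epshieldsym(s)$), but your termination argument does not actually need that stronger claim since satisfaction of the condition at $\epsilon=1$ already suffices.
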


\begin{proof} From Proposition \ref{proposition:lower_bound}, we know that $\minprphieps(s)$ is a \emph{non-decreasing} function of $\epsilon \ \forall s \in \stateset$. For $\epsilon = 0$, note that $\epshieldsym(s) = \mathbb{A}(s)$, and therefore $\minprphieps(s) = \minprphi(s)$. Moreover, for $\epsilon = 1$, note that $\epshieldsym(s) = \left \{ \operatorname*{argmax}_a \maxqphi \right \}$ from Eq. \ref{eq:eps_safe_eqn}. This implies $\pi_\epsilon$ is the same as the optimally safe policy, $\optsafepolicy$, from Sec. \ref{sec:background}. Consequently, we have $\minprphieps(s) = \maxprphi(s)$. To summarize, $\minprphieps(s)$ is a non-decreasing function of $\epsilon$ that lies between $\minprphi(s)$ and $\maxprphi(s)$. 

Since expectation is a linear operation, $\mininitsafetyeps$ is also a non-decreasing function of $\epsilon$ that lies between $\mininitsafety$ and $\maxinitsafety$. Therefore, for any desired safety probability $\delta \leq \maxinitsafety$ (from Remark \ref{remark:safety_upper_bound}), there exists an $\epsilon \in [0, 1]$ such that $\mininitsafetyeps \geq \delta$. Finally, since $\actinitsafetyeps$ is lower bounded by $\mininitsafetyeps$ (Proposition \ref{proposition:lower_bound}), we conclude that the Algorithm \ref{alg:shield_design} always terminates with the $\epshield, \epstarshieldsym$, that guarantees the desired safety probaility $\delta$.
\end{proof}

We note that irrespective of the choice to pick any action from $\epshieldsym(s)$ when $\pi(s) \not \in \epshieldsym(s)$ (Eq. \ref{eq:pi_epsilon_defn}), Algorithm \ref{alg:shield_design} returns an $\epshield, \epstarshieldsym$, which guarantees the desired safety probability. For example, one could select actions from $\epshieldsym(s)$ prioritizing either task-efficiency or safety ($\operatorname*{argmax}_a \maxqphi$). We also note that the shield design in \cite{jansen2020safe} and \cite{bouton2019reinforcement} is similar to our $\epshield$ definition. However, our key novelty is that unlike \cite{jansen2020safe} and \cite{bouton2019reinforcement} which do not provide any guarantee on achieving the required safety probability, our approach (Algorithm \ref{alg:shield_design}) returns $\epstarshieldsym$ which guarantees the required safety probability.

\begin{remark}Algorithm \ref{alg:shield_design} can be modified to yield an $\epshield$ even when $\pi$ is not known, in cases like human-teleoperation. Since the modified policy cannot be computed without $\pi$, we instead check for $\mininitsafetyeps \geq \delta$ in line 7 of the Algorithm \ref{alg:shield_design}. This guarantees safety probability of at least $\delta$ for any modified policy $\pi_\epsilon$.\end{remark}

Hence, for safe networked control, we construct the \coolnameabbr{} (refer to Sec. \ref{sec:dcmdp}), and given the safety specification $\varphi = \square \neg \bad$ and the cloud controller $\picloud$, we use Algorithm \ref{alg:shield_design} to construct the $\epshield, \epstarshieldsym$, which ensures a safety probability greater than or equal to $\delta$ for the networked control system. The same can be extended to reach-avoid specifications by casting them into reachability specifications (refer to \cite{baier2008principles}). More broadly, our approach works for any specification that can be cast into a reachability specification.

\section{Experiments}
\label{sec:experiments}

Now, we show empirically that the shield ensures safety in the presence of communication delays. We analyze the behavior of the agent with shields constructed using two different \coolnameabbr s:  ``constant delay'' when only $\maxcommdelay$ is known, and ``random delay'' when $\ptau$ is modeled additionally. 

We test on three environments,

\begin{itemize}[leftmargin=*]
    \item A 2D $8 \times 8$ gridworld simulation setup where the controlled robot, initialized at (0,0), is tasked with reaching the $\mathrm{goal}$ at (7,7) while avoiding collision with a dynamic obstacle. Each episode runs for 50 timesteps. An episode is considered a \emph{win} if the robot reaches the goal without colliding, a \emph{loss} if there is a collision or a \emph{draw} otherwise. The cloud controller is learned using tabular Q-learning.  
    \item A car-following simulation setup where the ego robot has to follow the leader car with a minimum safety distance of $5$m. The system state consists of relative distance and relative velocity. The leader car can accelerate anywhere between $-0.2$m/s$^2$ and $0.2$m/s$^2$, and the ego robot can accelerate between $-0.5$m/s$^2$ and $0.5$m/s$^2$. Each episode runs for $100$s. The cloud controller is a pre-trained RL agent that maximizes distance traveled and minimizes collisions with the leader. We discretize the relative distance and relative velocity to obtain a finite state space.

    \item The hardware setup (Fig. \ref{fig:introduction_figure}) with two F1/10th cars \cite{o2019f1}. The ego robot is equipped with a laser rangefinder. The generated point cloud is transmitted over WiFi to a remote server (the cloud). Here the state is estimated and a time-optimal control command is sent back over WiFi to the robot. The robot has to follow the leader as quickly as possible while maintaining a safe distance of at least 0.2m.

\end{itemize}

\begin{figure*}[thpb]
\centering

\begin{subfigure}[b]{0.4\columnwidth}
     \includegraphics[width=1.0\textwidth]{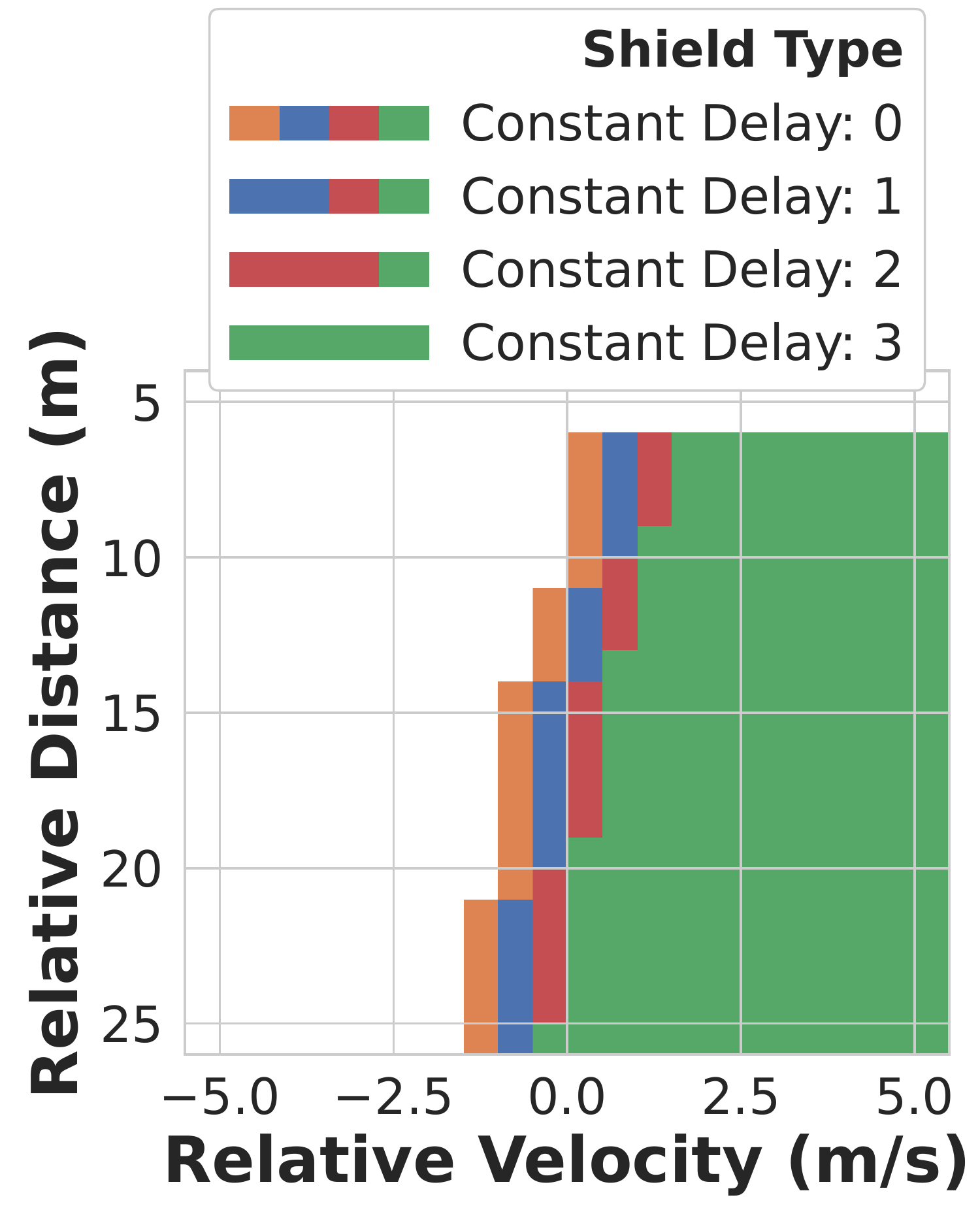}
    \caption{\footnotesize{Safe states - constant delay}}
    \label{fig:simulation_results_a}
 \end{subfigure}
 \hfill
 \begin{subfigure}[b]{0.4\columnwidth}
     \includegraphics[width=1.0\textwidth]{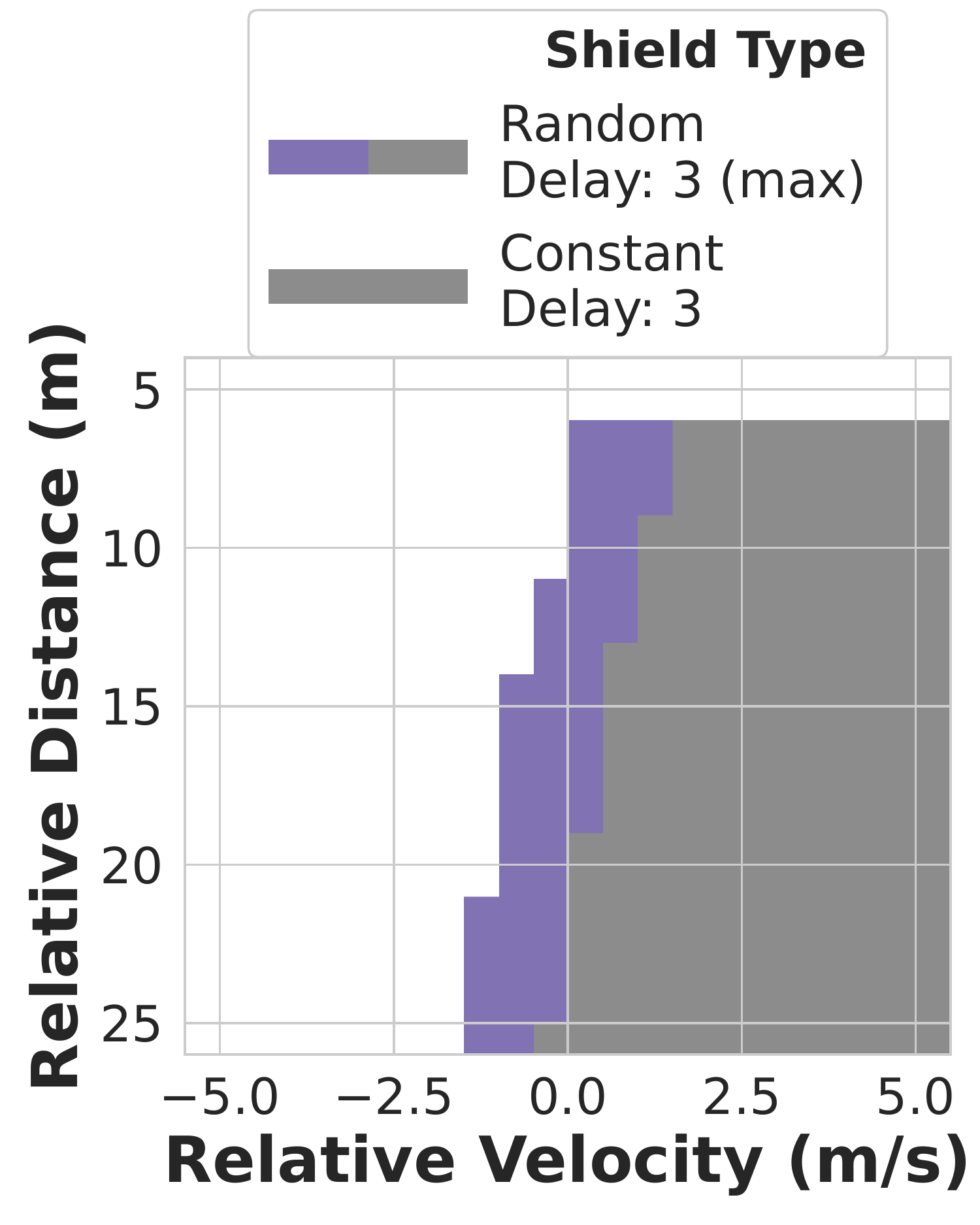}
    \caption{\footnotesize{Safe states - random delay}}
    \label{fig:simulation_results_b}
 \end{subfigure}
  \hfill
 \begin{subfigure}[b]{0.9\columnwidth}
     \centering
     \includegraphics[width=1.0\textwidth]{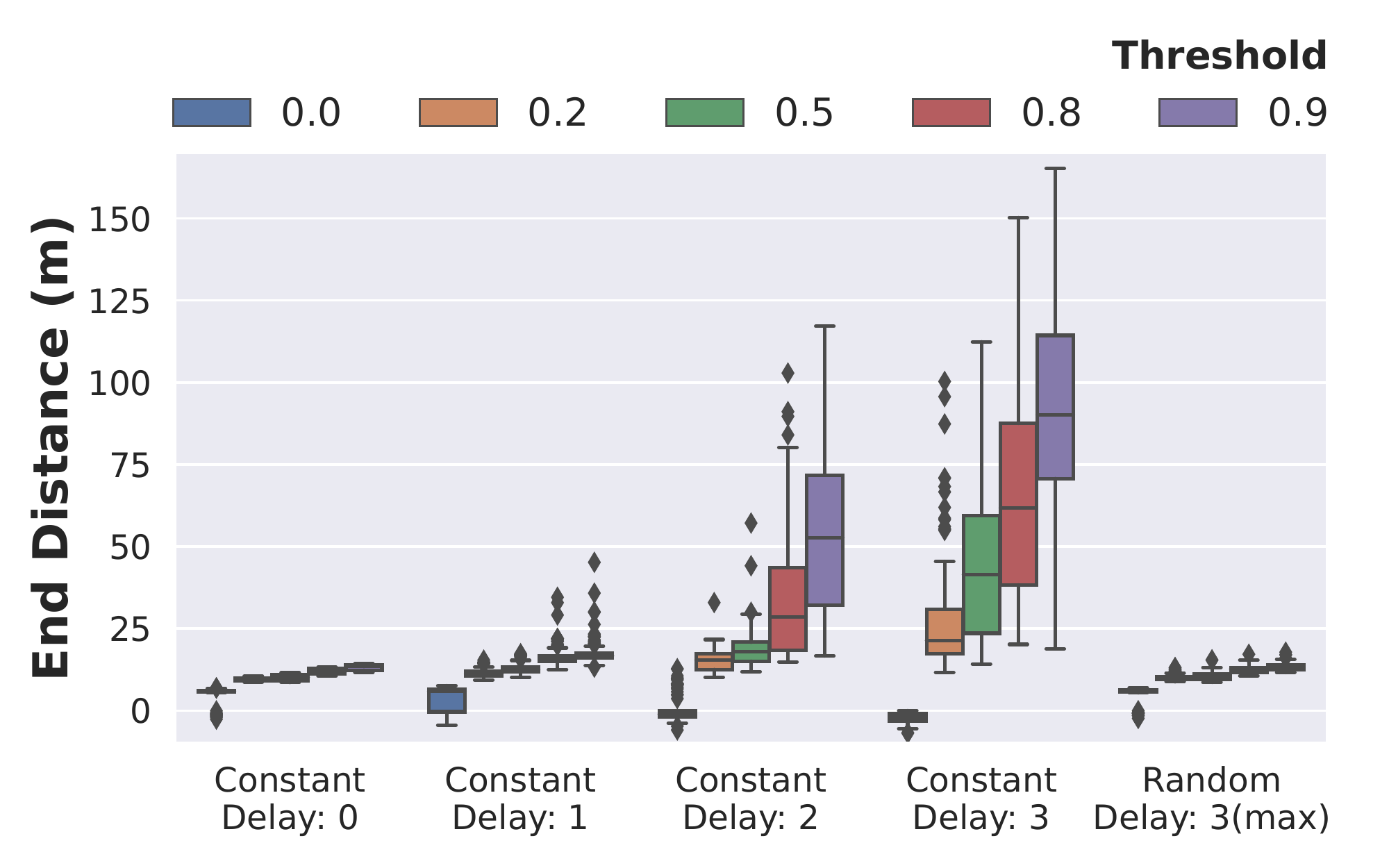}
    \caption{\footnotesize{Quantitative results}}
    \label{fig:simulation_results_c}
 \end{subfigure}
 \begin{subfigure}[b]{0.4\columnwidth}
     \centering
     \includegraphics[width=1.0\textwidth]{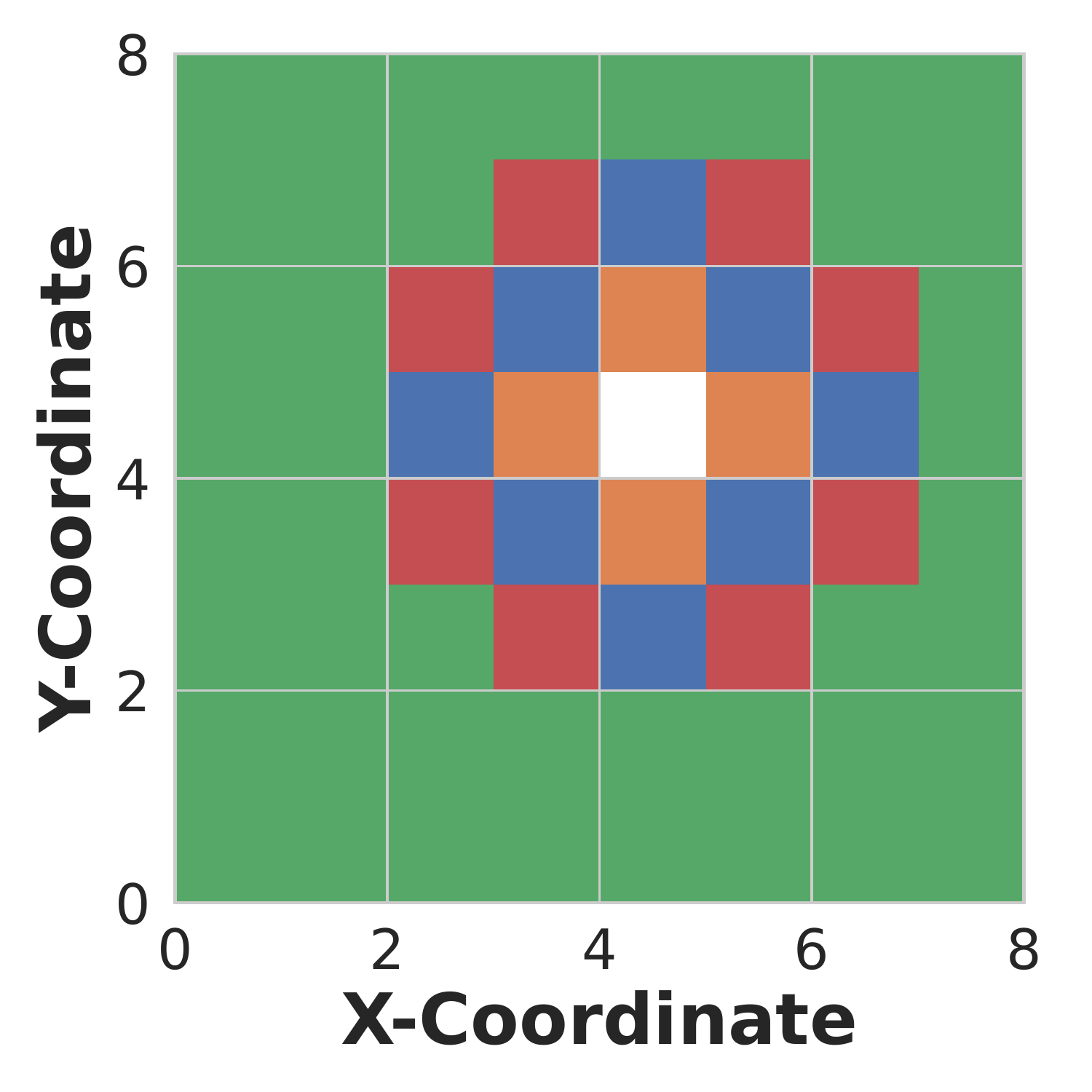}
    \caption{\footnotesize{Safe states - constant delay}}
    \label{fig:simulation_results_d}
 \end{subfigure}
 \hfill
 \begin{subfigure}[b]{0.4\columnwidth}
     \centering
     \includegraphics[width=1.0\textwidth]{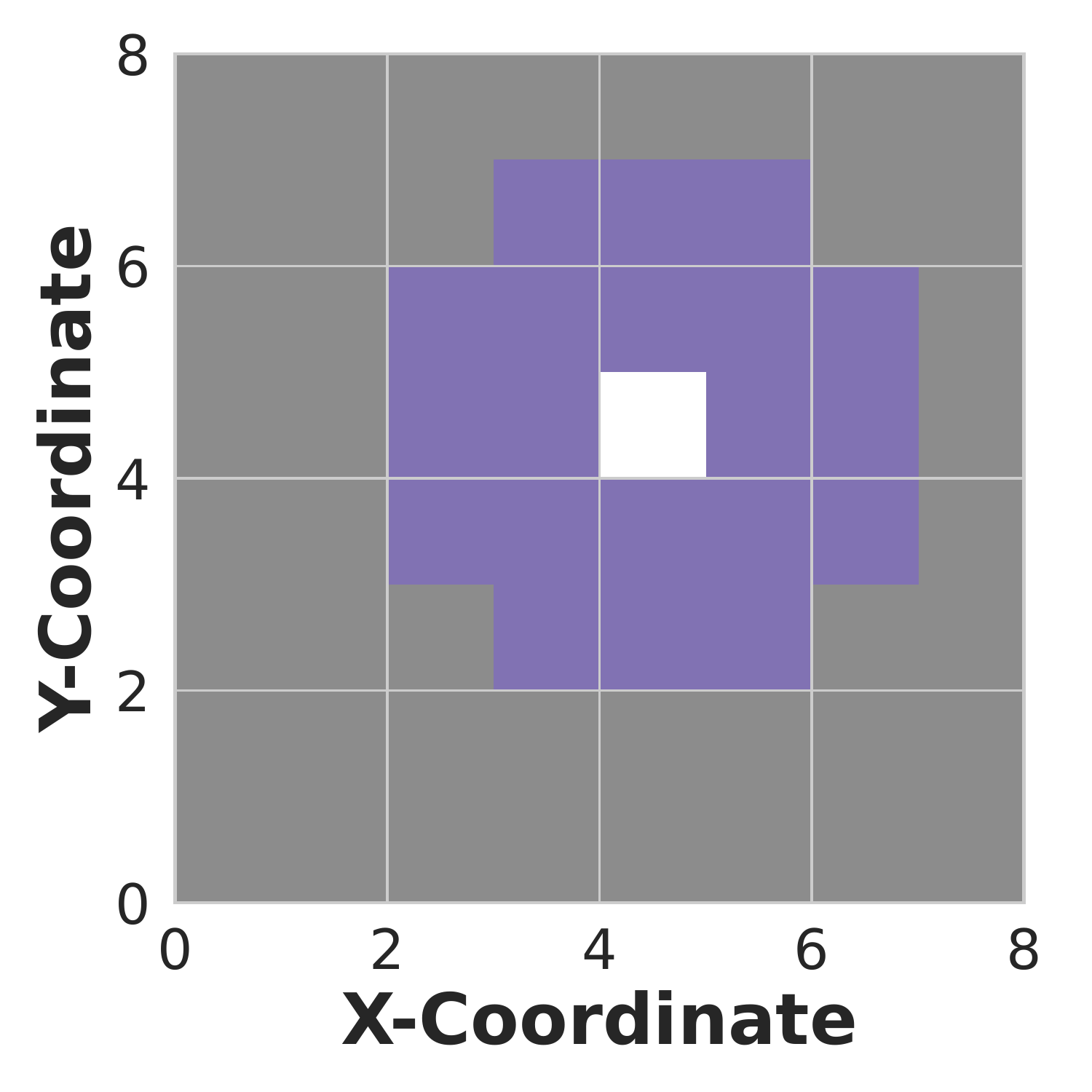}
    \caption{\footnotesize{Safe states - random delay}}
    \label{fig:simulation_results_e}
 \end{subfigure}
 \hfill
 \begin{subfigure}[b]{0.9\columnwidth}
     \centering
     \includegraphics[width=1.0\textwidth]{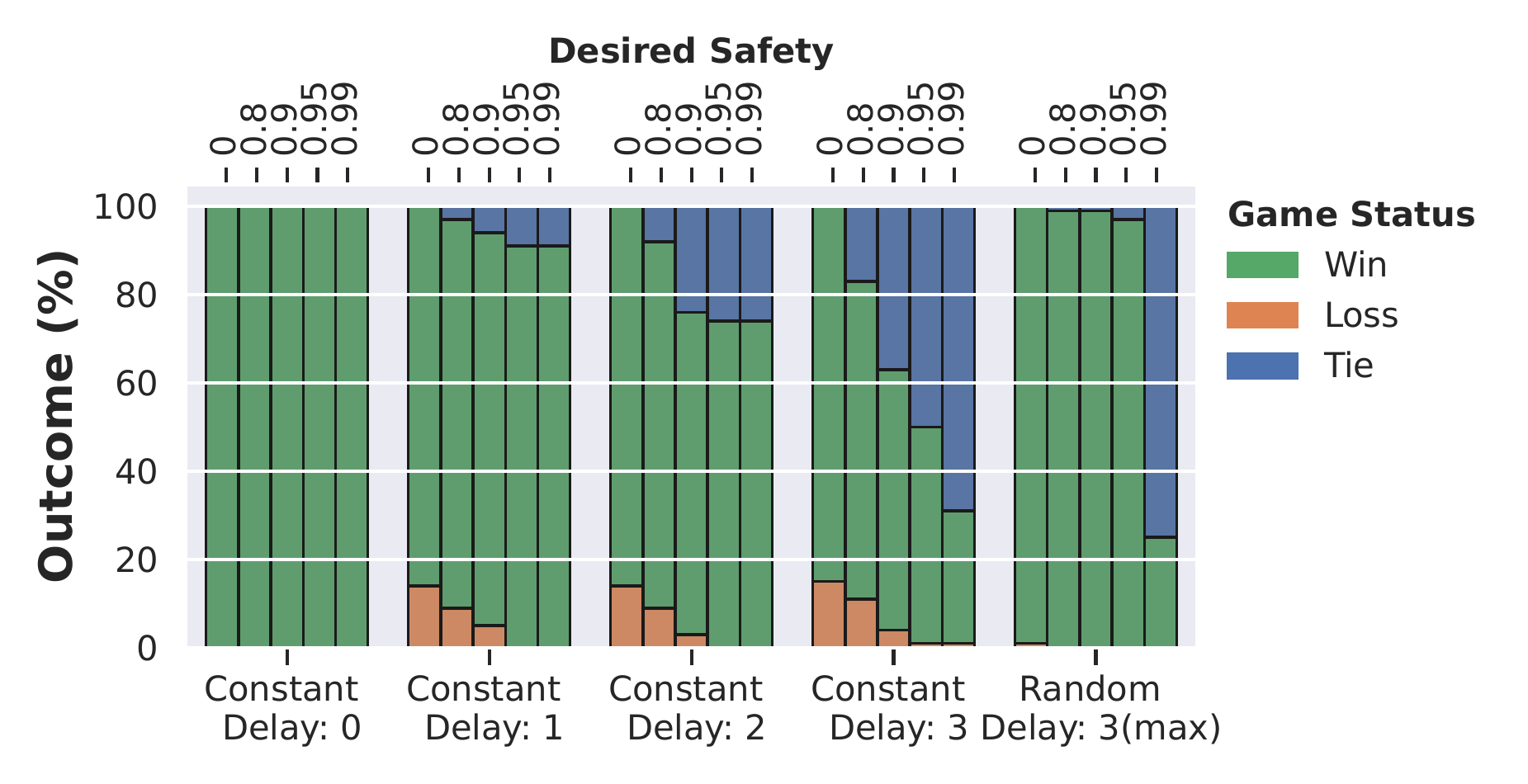}
    \caption{\footnotesize{Quantitative results}}
    \label{fig:simulation_results_f}
 \end{subfigure}
     
\caption{\textbf{Shielding leads to safe networked control in simulations.} Top row - Car following simulation results; Bottom row - Gridworld simulation results. Figs. \ref{fig:simulation_results_a} and \ref{fig:simulation_results_d} show the set of safe initial states with maximum safety probability greater than $0.95$ for the \coolname{} for the constant delay case. The set of safe states expands as the maximum delay ($\maxcommdelay$) decreases. This is depicted by the legend that has multiple colors attributed to lower latencies.  Figs. \ref{fig:simulation_results_b} and \ref{fig:simulation_results_e} compares the set of safe initial states with maximum safety probability greater than $0.95$ for the random and constant delay cases with $\maxcommdelay=3$ in both the cases. The set of safe states is larger in the case of random delay as the shield exploits the knowledge of the delay transitions to allow the agent to act more aggressively. The white color represents initial states that have maximum safety probability less than $0.95$. For the gridworld setup, the obstacle is located at (4,4). Figs. \ref{fig:simulation_results_c} and \ref{fig:simulation_results_f} show that as latency increases, the system tends to be conservative, leading to increased distances in the car-following scenario, and an increased number of ties in the gridworld case.}

\label{fig:simulation_results}
\vspace{-1em}
\end{figure*}

\begin{figure*}[h!]
\vskip 0.2in
\centering
\begin{subfigure}[b]{0.9\columnwidth}
     \centering
     \includegraphics[width=1.0\textwidth]{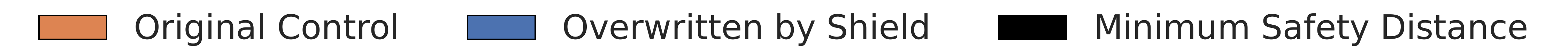}
     \includegraphics[width=1.0\textwidth]{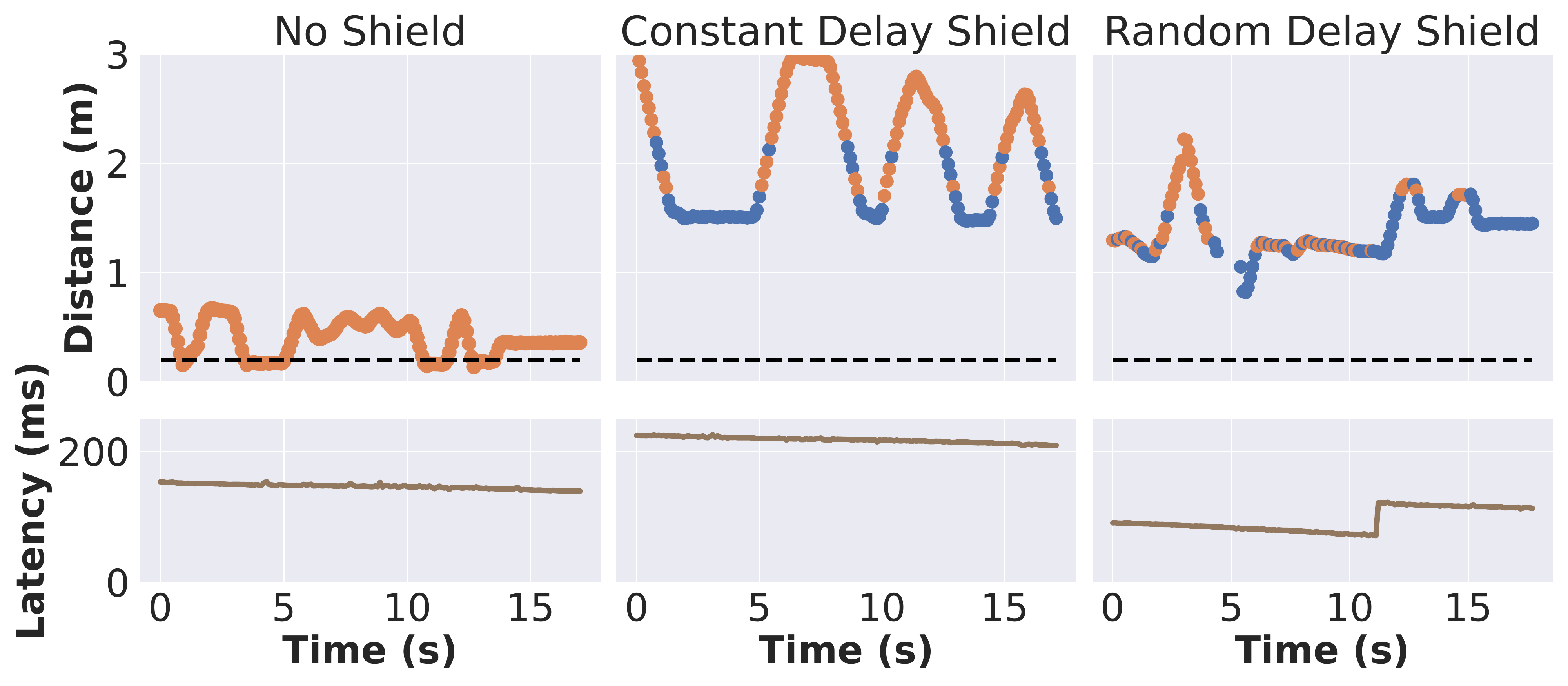}
    \caption{\footnotesize{Qualitative Results.}}
    \label{fig:demo_results_qualitative}
 \end{subfigure}
 \hfill
 \begin{subfigure}[b]{0.35\columnwidth}
     \centering
     \includegraphics[width=1.0\textwidth]{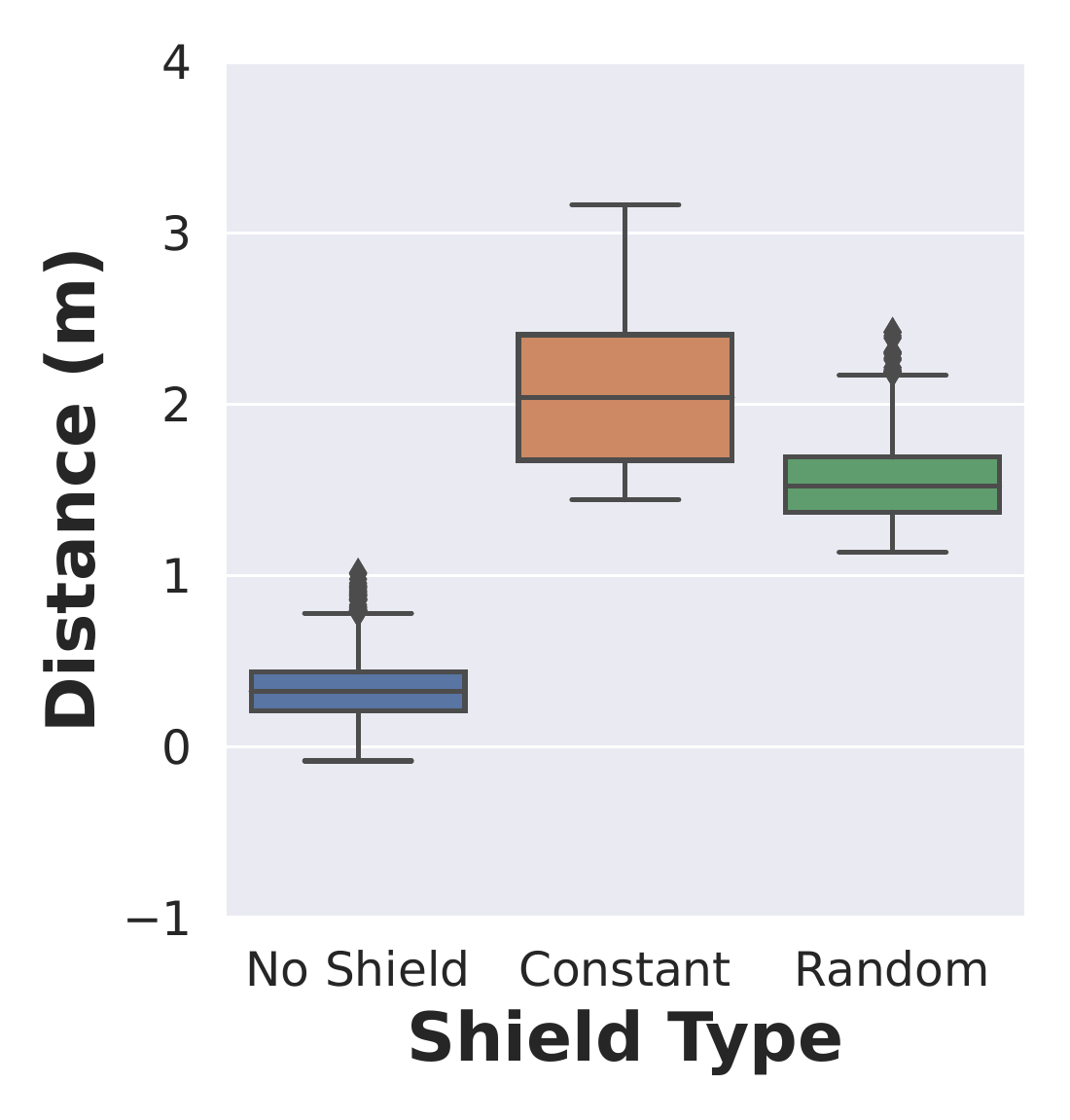}
     \caption{\footnotesize{Quantitative Results.}}
     \label{fig:demo_results_quantitative}
 \end{subfigure}
 \hfill
 \begin{subfigure}[b]{0.375\columnwidth}
     \centering
     \includegraphics[width=1.0\textwidth]{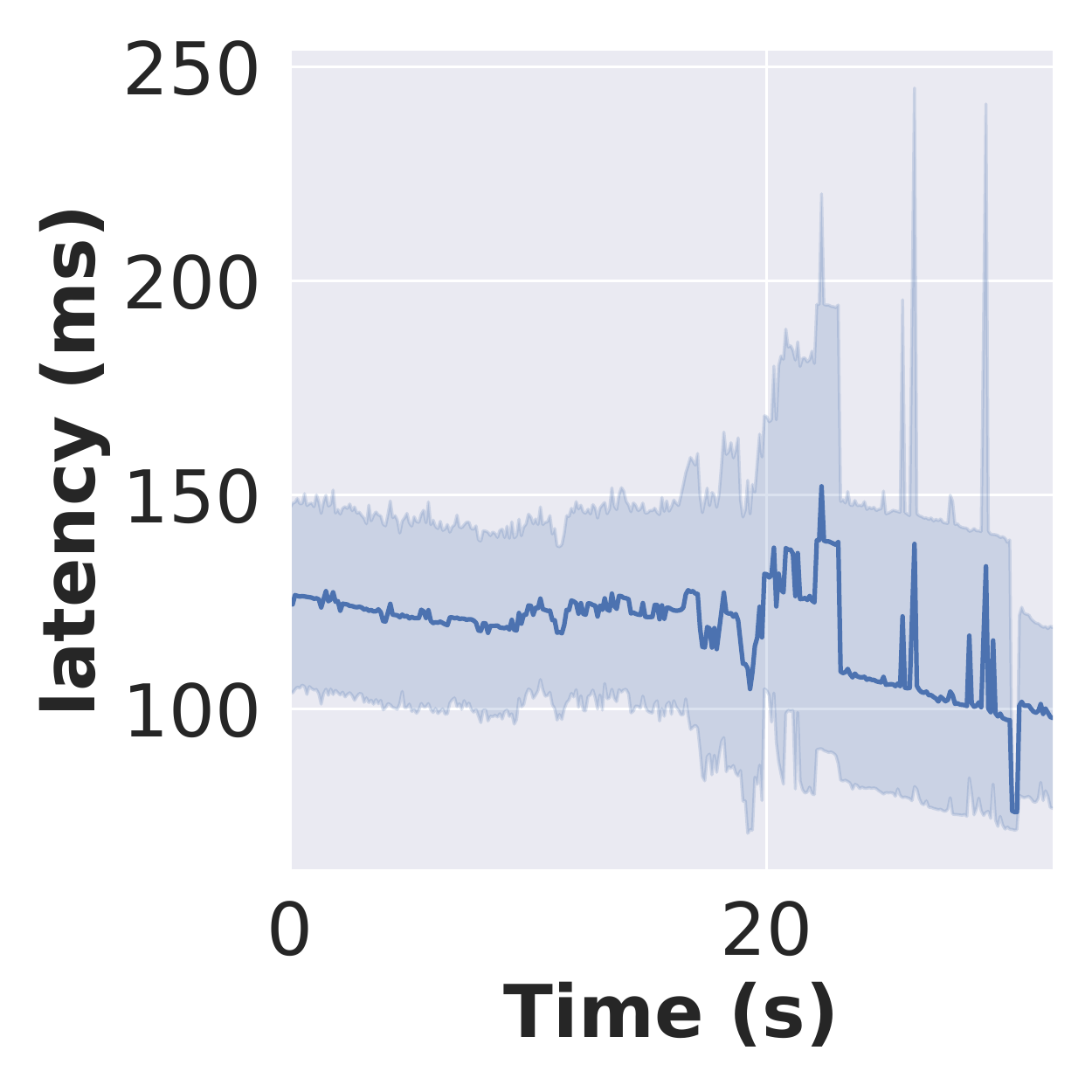}
     \caption{\footnotesize{WiFi traces.}}
     \label{fig:wifi_traces}
 \end{subfigure}
 \hfill
\begin{subfigure}[b]{0.375\columnwidth}
     \centering
     \includegraphics[width=1.0\textwidth]{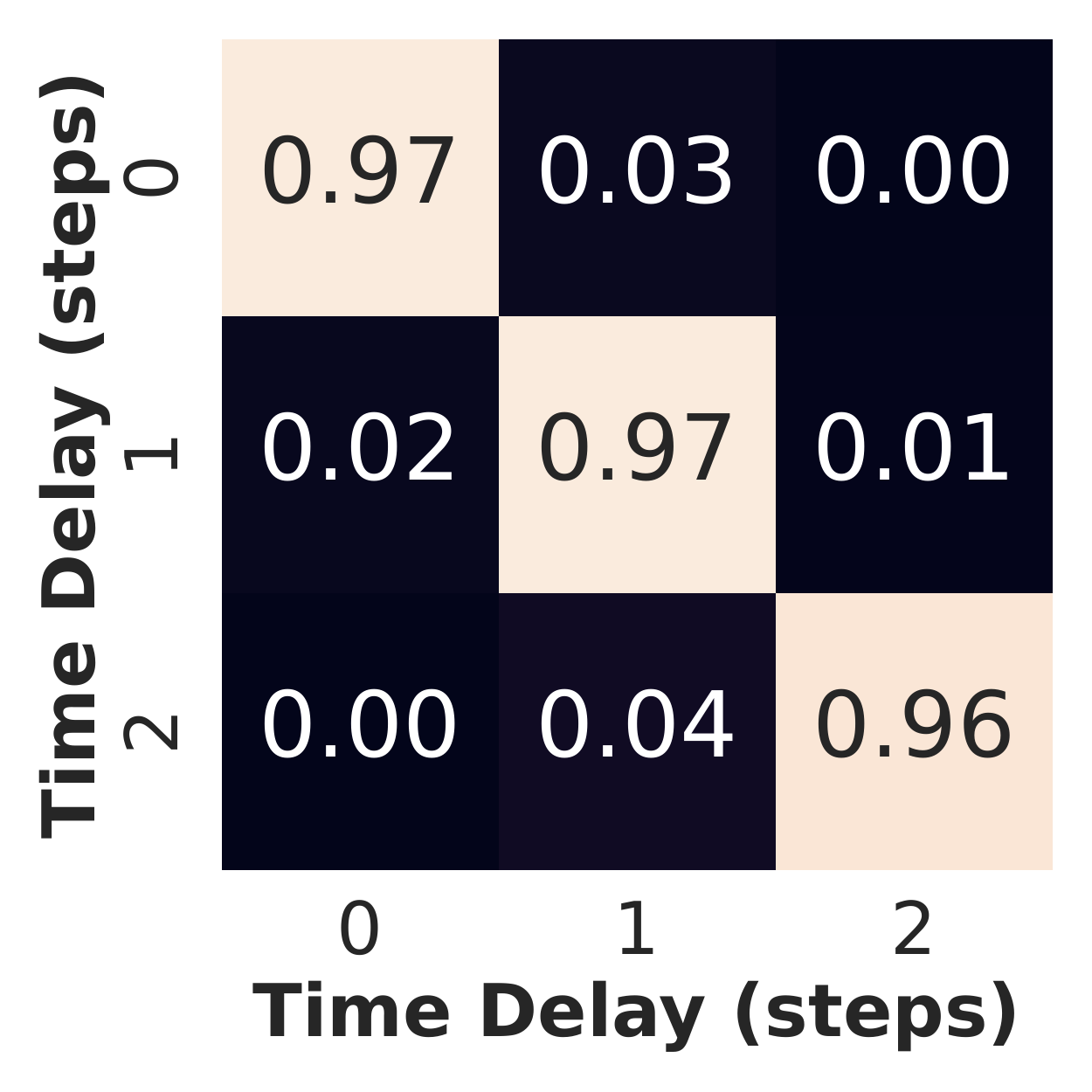}
     \caption{\footnotesize{Delay transition model}}
     \label{fig:delay_transition_matrix}
 \end{subfigure}
     
\caption{\textbf{Real World Demonstration Results.} Fig. \ref{fig:demo_results_qualitative} shows recorded trajectories from our hardware setup. Without our safe networked control approach, the system fails to satisfy the safety specification ``always remain at least $0.2$ meters away''. However, our approach can satisfy the safety specification for constant and random delays. Also, as seen in Fig. \ref{fig:demo_results_quantitative}, the shield for the random delay case exploits the knowledge of delay transitions in Wi-Fi rather than assuming only the maximum latency, which allows the ego robot to follow the leader car at a closer distance. We set $\epsilon=0.95$ to construct the $\epshield$. In Fig. \ref{fig:wifi_traces} and Fig. \ref{fig:delay_transition_matrix}, we show how the delay transition probability function, $\ptau$, is estimated using multiple runs of the Wi-Fi latency time-series data. Here, $\ptau$ is a conditional probability distribution with $3$ possible delays ($0,1,2$), where each delay is a bin of size $100$ ms.}

\label{fig:demo_results}
\vspace{-1em}
\end{figure*}

For the car-following and the hardware setup, the safety specification is $\square \neg \bad$, where $\bad$ consists of states where the distance between the cars is less than 5m and 0.2m respectively. For the gridworld, it is $\neg \ \bad \ \cup \mathrm{goal}$, where $\bad$ is the set of states where the robot and the obstacle are in the same location, i.e., collision. In our simulation environments, we experimented with the maximum delay ranging from 0 to 3 time steps for both constant and random delay. For random delay, we assume a delay transition probability function, $\latencymodel$, where the delay is mostly 0 and changes to other values with low probability.

\emph{How does the performance of our safe networked control approach vary with communication delay?}
The safety of the teleoperated robot reduces when the communication delay increases. We observe this in Figs. \ref{fig:simulation_results_a} and \ref{fig:simulation_results_d} for the two simulation setups, for the constant delay case. The set of states for which maximum safety probability $\maxprphi(s)$ (Sec. \ref{sec:background}) is greater than a $\delta$ value shrinks with increasing delay. It shows that when the delay is large it is safer for the robot to stay farther away from the dynamic obstacle (gridworld) and for the ego robot to maintain a larger relative distance and velocity between itself and the leader car (car-following). 

The shields ensure the desired safety probability $\delta$ for different delays. However, for the same safety probability, the task performance degrades with increasing delay due to increasing uncertainty in the system state. We show this quantitatively for the two simulation setups with constant delay. In the gridworld, with larger delays, the shield increasingly restricts the robot from moving aggressively toward the $\mathrm{goal}$ to avoid collisions. As such, it effectively sacrifices a win for a tie. Similarly, in the car-following scenario, the distance maintained from the leader robot increases (Fig. \ref{fig:simulation_results_c}). We observe a similar trend in our hardware setup that runs on a wireless network with stochastic delays (Fig. \ref{fig:demo_results_qualitative}). During the initial $10$s when the delay is less than $100$ ms, the average distance maintained is less than $1.25$ m. Then, when the delay is about $200$ ms, the ego-robot starts to maintain a larger distance of around $1.5$ m. 

\emph{How does $\delta$ affect the safety-efficiency trade-off?}

Our key insight is that we can vary $\delta$ to trade off safety for task efficiency. We observe this in the constant delay case, where increasing $\delta$ leads to increasingly conservative behavior with more restrictions from the shield. In the gridworld, Fig. \ref{fig:simulation_results_f} shows fewer wins and more ties, an indicator of reduced task efficiency. Additionally, the number of losses decreases as safety is prioritized. Similarly, for car-following, the average distance maintained from the leader car increases (Fig. \ref{fig:simulation_results_c}). 
On the other hand, $\delta=0$ is the un-shielded approach, which leads to a violation of the safety specification.

\emph{How does incorporating the delay transition probability function $\latencymodel$ affect safety and efficiency?} We now illustrate that by incorporating $\latencymodel$, our safe networked control approach performs more efficiently since the \coolnameabbr{} model is more accurate. Whereas, when only $\maxcommdelay$ is known, the model is less accurate as it assumes the observation from $\maxcommdelay$ steps before to be the latest available system state even if the delay is small and more recent system states are available. We compare the \coolnameabbr{} for constant delay against the \coolnameabbr{} for random delay with $\maxcommdelay = 3$ in both cases. Firstly, Figs. \ref{fig:simulation_results_b} and \ref{fig:simulation_results_e} show that the set of states for which maximum safety probability $\maxprphi(s)$ (Sec. \ref{sec:background}) is greater than a $\delta$ value is larger when $\latencymodel$ is incorporated. Secondly, we observe more wins and fewer draws in the gridworld, and lower aggregate distance maintained for the car following setup as seen in Figs. \ref{fig:simulation_results_f} and \ref{fig:simulation_results_c} respectively. For the hardware setup, $\latencymodel$ is obtained experimentally (see Fig. \ref{fig:delay_transition_matrix}). Similar to the car following setup, the distance maintained between the two robots is less in the case of random delay when compared to constant delay (see Fig. \ref{fig:demo_results_quantitative}). This difference in safety distance is statistically significant with a Wilcoxon p-value $ < 0.001$. 
To summarize, we infer that incorporating $\ptau$ in our \coolnameabbr{} design allows for efficient task performance \emph{without} compromising safety.

\emph{Does a minimally intrusive shield always lead to safety?} Fig. \ref{fig:demo_results_qualitative} shows the state trajectory in the presence and absence of the $\epshield$, and the instances when the $\epshield$ overwrites the cloud controller of the hardware setup. The $\epshield$ overwrites control commands when close to the leader car (relatively unsafe), and is inactive when further away. For example, in the constant delay case, the shield is inactive when the distance is above $\sim$2.25m, and still ensures safety.

\begin{table}[]
\resizebox{\columnwidth}{!}{%
\begin{tabular}{@{}lllllll@{}}
\toprule
 & Metrics & \begin{tabular}[c]{@{}l@{}}Constant\\ delay: 0\end{tabular} & \begin{tabular}[c]{@{}l@{}}Constant\\ delay: 1\end{tabular} & \begin{tabular}[c]{@{}l@{}}Constant\\ delay: 2\end{tabular} & \begin{tabular}[c]{@{}l@{}}Constant\\ delay: 3\end{tabular} & \begin{tabular}[c]{@{}l@{}}Random\\ delay: 3 (max)\end{tabular} \\ \midrule
\multirow{3}{*}{Car-following} & States & 484 & 2420 & 12100 & 60500 & 75504 \\ \cmidrule(l){2-7} 
 & Time (s) & 0.015 & 0.044 & 0.299 & 2.173 & 34.08 \\ \cmidrule(l){2-7} 
 & Memory (KB) & 12 & 68 & 360 & 1978 & 2645 \\ \midrule
\multirow{3}{*}{Gridworld} & States & 8192 & 40960 & 204800 & 1024000 & 1277952 \\ \cmidrule(l){2-7} 
 & Time (s) & 1.694 & 19.711 & 175.84 & 1424.44 & 3956.81 \\ \cmidrule(l){2-7} 
 & Memory (KB) & 440 & 1448 & 6808 & 36210 & 48546 \\ \bottomrule
\end{tabular}%
}
\caption{\textbf{Run time and memory analysis for the shield construction.} In this table, we show the number of states, time taken to compute the maximum safety probabilities (value iteration, line 2, Algorithm \ref{alg:shield_design}), and the memory occupied by the shield for both the car-following and the grid-world simulation environments. The value iteration process terminates when the maximum change in the safety probability for any state between two consecutive iterations is less than $10^{-6}$. 
}
\vspace{-5mm}
\label{tab:simulation_complexity}
\end{table}

\emph{What are the practical effects of discretizing the state space and communication delay?} 
We explain the effects of discretization on the time taken for the \coolnameabbr{}'s shield construction and the achievable safety probabilities. To assess the effect on the time taken for the shield construction, we quantify the time complexity of Algorithm \ref{alg:shield_design}, which mainly depends on line 2 (maximum safety probability for all state-action pairs). Since this is a value iteration procedure, the time complexity of Algorithm \ref{alg:shield_design} is in the order of $\mathcal{O}(|\stateset|^2|\mathrm{Act}|)$ for any MDP $\mathcal{M} = \mdptuple$ (refer to \cite{baier2008principles}). From the \emph{Basic} MDP to \coolnameabbr{}, the state space increases exponentially with $\maxcommdelay$ from $|\mathrm{S}|$ to the order of $|\mathrm{S}|{(|\mathrm{Act}|+1)^{\maxcommdelay}}(\maxcommdelay+1)$. So, the time complexity of Algorithm \ref{alg:shield_design} for the \coolnameabbr{} also increases accordingly. However, \emph{the shield construction is an offline process}, and for practically observed delay values (Fig. \ref{fig:wifi_traces}), our approach scales well. We also provide a comprehensive analysis of the state space size, time taken to compute line 2 in Algorithm \ref{alg:shield_design}, and the size of the synthesized shield for the \coolnameabbr{} for our simulation environments in Table. \ref{tab:simulation_complexity}. Even for $\maxcommdelay = 3$, the time taken to compute the maximum safety probabilities is only close to an hour and the shield size is less than 50 MB. The effect of discretization on safety probability depends on the environment and the discretization method used, which is beyond the scope of this paper.

\section{Conclusion and future directions}
\label{sec:conclusion}

This paper provides a novel approach to accurately model the networked control system transitions, in the presence of stochastic communication delays, as an MDP. Consequently, we use the MDP to synthesize shields for safe networked control. We demonstrate the efficiency of our approach on simulation and hardware setups. Our work is timely since we are seeing a surge of teleoperated robots. As future work, we believe that exploring state space reduction techniques to handle the exponential growth of state space in \coolnameabbr{} and exploring solutions for continuous-time systems with delay using HJ reachability are promising directions.

\label{conclusion}

\section{Acknowledgements}

This work was supported in part by the Lockheed Martin Corporation and by the Office of Naval Research (ONR) under Grant No. N00014-22-1-2254. Additionally, this work received support from the National Science Foundation (NSF) under Grant No. 2148186 and is further supported by funding provided by Federal Agencies and Industry Partners as specified in the Resilient \& Intelligent NextG Systems (RINGS) program. The work solely reflects the opinions and conclusions of the authors and does not represent the views of any sponsor.

\bibliographystyle{IEEEtran}

\bibliography{ref/IEEEabrv, ref/external, ref/swarm}

\end{document}